\documentclass[10pt, twocolumn]{article}

\usepackage[left=0.75in, right=0.75in, top=1in, bottom=1in]{geometry}

\usepackage{microtype}      
\usepackage{graphicx}       
\usepackage{booktabs}       
\usepackage{url}            
\usepackage[hidelinks]{hyperref} 

\usepackage{amsmath}
\usepackage{amssymb}
\usepackage{mathtools}
\usepackage{amsthm}

\usepackage[numbers,sort&compress]{natbib}

\usepackage{authblk}

\theoremstyle{plain}
\newtheorem{theorem}{Theorem}[section]
\newtheorem{proposition}[theorem]{Proposition}
\newtheorem{lemma}[theorem]{Lemma}
\newtheorem{corollary}[theorem]{Corollary}
\theoremstyle{definition}

\theoremstyle{remark}
\newtheorem{remark}[theorem]{Remark}

\title{\textbf{The Quantization Trap: Breaking Linear Scaling Laws in Multi-Hop Reasoning}}

\author[1]{Henry Han\thanks{Corresponding author: \texttt{henry\_han@baylor.edu}}}
\author[2]{Xiyang Liu}
\author[2]{Xiaodong Wang\thanks{Corresponding author: \texttt{wangxiaodong1021@gmail.com}}}
\author[3]{Fei Han}
\author[4]{Xiaodong Li}

\affil[1]{School of Engineering and Computer Science, Baylor University, Waco, TX 76798, USA}
\affil[2]{School of Computer Science and Technology, Xidian University, Xi'an, China, 710126}
\affil[3]{School of Computer Science and Communication Engineering, Jiangsu University, Zhenjiang, 212013, China}
\affil[4]{Beijing Electronic Science and Technology Institute, Beijing 100070, China}

\date{}

\begin{document}

\maketitle

\begin{abstract}
Neural scaling laws provide a predictable recipe for AI advancement: reducing numerical precision should linearly improve computational efficiency and energy profile ($E \propto \text{bits}$). In this paper, we demonstrate that this scaling law breaks in the context of multi-hop reasoning. We reveal a 'quantization trap' where reducing precision from 16-bit to 8/4-bit paradoxically increases more net energy consumption while degrading reasoning accuracy. We provide a rigorous theoretical decomposition that attributes this failure to hardware casting overhead, the hidden latency cost of dequantization kernels, which becomes a dominant bottleneck in sequential reasoning chains, as well as to a sequential energy amortization failure. As a result, scaling law breaking is unavoidable in practice. We formalize a Critical Model Scale $N^*$ that 
predicts when the trap dissolves or deepens as a 
function of model size, batch size, and hardware 
configuration, validated across a 120$\times$ 
range (0.6B--72B) on six GPU architectures. Our findings suggest that the industry’s “smaller-is-better” heuristic is mathematically counterproductive for complex reasoning tasks.
\end{abstract}

\vspace{0.5em}
\noindent\textbf{Keywords:} Scaling law, Sustainability, Quantization, LLM
\vspace{1em}


\section{Introduction}

The "Scaling Law" is the foundational dogma of modern Large Language Models (LLMs) \cite{kaplan2020scaling}. It posits that performance is a predictable function of compute and by extension, that quantization is almost a "free lunch" for efficiency: reducing bits $b$ should yield a proportional reduction in energy $E$ and memory $V$.

Consequently, the industry has widely adopted aggressive compression techniques to facilitate wider LLM deployment. For instance, the transition from 16-bit floating point (FP16) to 4-bit NormalFloat (NF4) or GPTQ-weighted quantization \cite{dettmers2024qlora, frantar2022gptq} has become the standard protocol for serving 70B-parameter models on single consumer-grade GPUs. These methods are built on the empirical success of \textit{'near-lossless'} quantization in general linguistic tasks, where the marginal increase in perplexity is viewed as a negligible price for the 4$\times$ reduction in memory footprint and the theoretical 2--4$\times$ gain in throughput \cite{dettmers2024qlora, lin2023awq} . This \textit{'linear scaling law'} paradigm assumes that efficiency gains are monotonic: as the bit-width $b$ decreases, the inference cost of LLM (functionally the 'cost-to-reason' in complex tasks) should follow a downward linear trajectory.

However, we identify a fundamental breakdown of this linear scaling paradigm within multi-hop reasoning. Multi-hop reasoning \cite{wei2022} is the process of decomposing a complex problem into a sequential chain of intermediate logical steps, where each deduction acts as a strict prerequisite for the next. Here, the sequential accumulation of quantization noise across logical chains triggers a violation of the scaling law and we term this as \textit{Quantization Trap}, where energy consumption increases and reasoning efficiency decreases for low-bit models, irrespective of model scale. 

Technically, the scaling law violation (the mechanism) drives the model into this trap (the outcome), where the compressed model is strictly dominated by the original baseline.

We attribute this failure to the hardware-level \textit{'Casting Overhead,'} where the cumulative micro-latency of de-quantizing weights back to native precision at each reasoning hop outweighs the memory bandwidth gains of compression besides rigorous analysis.
Our specific contributions:

\begin{enumerate}
    \item \textit{Sustainability Index (SI) framework:} We develop a Sustainability Index (SI) framework to evaluate AI across the three pillars of sustainability: Economic Efficiency ($E_{SI}$), Social Trust ($T_{SI}$), and Environmental Energy ($S_{SI}$) to detect the Quantization Trap, in a mathematically rigorous way, which is a sub-optimal AI configurations (\textit{Sustainability Inversion}).
    \item \textit{Quantization Trap in multi-hop reasoning:} We empirically identify and characterize a fundamental breaking of linear scaling laws in multi-hop reasoning. Benchmarking diverse models (e.g., Mistral-7B \cite{mistral}, Qwen-3 (0.6B)) on multi-hop   reasoning-heavy datasets (e.g., GSM8K and MathQA) \cite{gsm8k, mathqa}, across NVIDIA L4, A100, H100, and Blackwell RTX 6000-pro architectures.
    \item \textit{Theoretical foundations of the Quantization Trap:} We prove a \emph{Sequential Amortization Failure} theorem establishing that the Quantization Trap is structurally unavoidable in multi-hop reasoning, and demonstrate that standard mitigation strategies are mathematically incapable of restoring reasoning trust.
\end{enumerate}

By exposing the breakdown of scaling laws, our findings provide a rigorous mathematical bridge between hardware telemetry and cognitive performance, enabling developers to identify and avoid \textit{Quantization Traps} before committing expensive GPU resources. Ultimately, this discovery will force a fundamental shift in LLM development away from brute-force compression toward precision-aware scaling that is tailored to the actual complexity of the reasoning task, besides a mitigation path.

\section{Blind spots in LLM scaling laws}

The current understanding of LLM scaling is dominated by the power-law relationships established by \cite{kaplan2020scaling} and \cite{hoffmann2022chinchilla}, implicitly assuming that reducing numerical precision yields a guaranteed, linear reduction in hardware cost. While these  scaling laws provide a recipe for pre-training, they treat "efficiency" as a one-dimensional variable (typically FLOPs or memory footprint), effectively blinding practitioners to the non-linear latency and energy penalties incurred by hardware-level de-quantization.

Consequently, the subsequent explosion in quantization research \cite{dettmers2024qlora, frantar2022gptq} has focused almost exclusively on preserving model accuracy at lower bit-widths, assuming that hardware efficiency (e.g., on GPU) improves monotonically with bit-reduction. 

However, existing scaling laws and evaluation frameworks suffer from at least two critical blind spots when applied to \textit{multi-hop reasoning}:
\begin{enumerate}
    \item  \textit{Sequential Deductive Fragility:} They ignore the sequential dependencies of reasoning tasks, where quantization noise compounds at every reasoning step. This accumulation makes multi-hop reasoning uniquely fragile, as minor errors in early reasoning hops act as false premises for subsequent steps, causing irreversible logical divergence. This drives the model into the \textit{'Quantization Trap'}, which remains invisible to standard single-turn benchmarks \cite{hendrycks2020}. 
    \item   \textit{Hardware-Software Disconnect:} They ignore the \textit{'Casting Overhead'} \cite{zhao2024atom}, the physical energy cost of de-quantizing low-bit weights back into native precision (FP16/BF16) during execution. This disconnect is particularly acute on widespread architectures like \textit{NVIDIA Ampere (A100)} and \textit{Hopper (H100)}; while powerful, these architectures lack native arithmetic support for non-standard compressed formats (e.g., NF4), necessitating a mandatory, high-energy de-quantization step for every inference operation \cite{dettmers2024qlora, nvidia2020ampere}. As a result, \textit{the linear scaling intuition} (“fewer bits $\Rightarrow$ lower cost”) may not hold.  The extra de-quantization (“casting”) work can outweigh the bandwidth savings, so using fewer bits does not reliably reduce latency or energy, and can even increase them.
\end{enumerate}

\textit{Quantization-Aware Inference Optimization.}
Recent advancements mitigate low-bit hardware overheads through 
kernel-level optimizations (e.g., GPTQ \cite{frantar2022gptq}, 
Marlin \cite{marlin2024}) and weight-preservation strategies 
(AWQ \cite{lin2024awq}, QuIP\# \cite{tseng2024quip}). 
Concurrently, systems like FlexGen \cite{sheng2023flexgen} and 
Atom \cite{zhao2024atom} maximize efficiency for 
throughput-constrained deployments. While these methods 
demonstrably reduce the \textit{throughput} penalty, they do not 
address the \textit{accuracy} degradation caused by quantization 
noise compounding across sequential reasoning steps, a gap 
that motivates the present work.

\textit{Quantization and Reasoning Accuracy.} 
Recent studies highlight how quantization affects reasoning. Empirically, W8A8 and W4A16 maintain near-lossless accuracy, while lower bit-widths severely degrade mathematical reasoning---particularly in smaller models~\cite{liu2025quantization, li2025quantmeets}. On the efficiency side, aggressive quantization can paradoxically increase reasoning latency and energy costs~\cite{Ra2024, Hu2025}. While valuable, these empirical studies share two critical gaps that our work resolves:

\begin{itemize}
    \item \textit{Isolated Evaluation:} Prior work measures accuracy or efficiency, but never both jointly. As our AWQ experiments show (Section~\ref{sec:cross-backend}), accuracy-only metrics can declare W4A16 ``lossless'' while masking a 1.7$\times$ energy penalty. Our SI framework provides the joint diagnostic needed to expose these hidden costs.
    
    \item \textit{Lack of Theoretical Mechanisms:} Existing studies report empirical trends (e.g., ``model size and task difficulty matter'') without explaining \emph{why}. Our theorems (Sequential Amortization Failure, Amortization-Trust Decoupling, and Critical Model Scale $N^*$) provide a predictive foundation. For example, our $N^*$ formula (Eq.~\ref{eq:nstar}) formalizes why larger models resist degradation, while our per-hop analysis (Table~\ref{tab:per-hop}) proves that accuracy loss scales specifically with reasoning \emph{depth}, rather than monolithic task difficulty.
\end{itemize}

To bridge this gap, we introduce a sustainability-focused 
evaluation that goes beyond single-number accuracy or perplexity. 
Specifically, we propose a unified, quantitative framework that 
jointly measures an LLM's economic efficiency (throughput and 
deployability), trustworthiness (task performance under reasoning 
stress), and environmental cost (energy per query). This framework 
makes it possible to systematically detect the 
\textit{Quantization Trap}: the regime where apparent efficiency 
gains from aggressive low-bit quantization collapse under real 
deployment constraints, pushing the system into a sub-optimal 
configuration that is strictly dominated by its full-precision 
counterpart.

Section 3 introduces the framework of Sustainability Index (SI) to formally quantify these trade-offs. Figure~\ref{fig:fig_0} illustrates our end-to-end evaluation pipeline, from model quantization through multi-hop reasoning to Sustainability Index computation and trap detection.

\begin{figure*}[!htpb]
\centering
\includegraphics[width=0.70\textwidth]{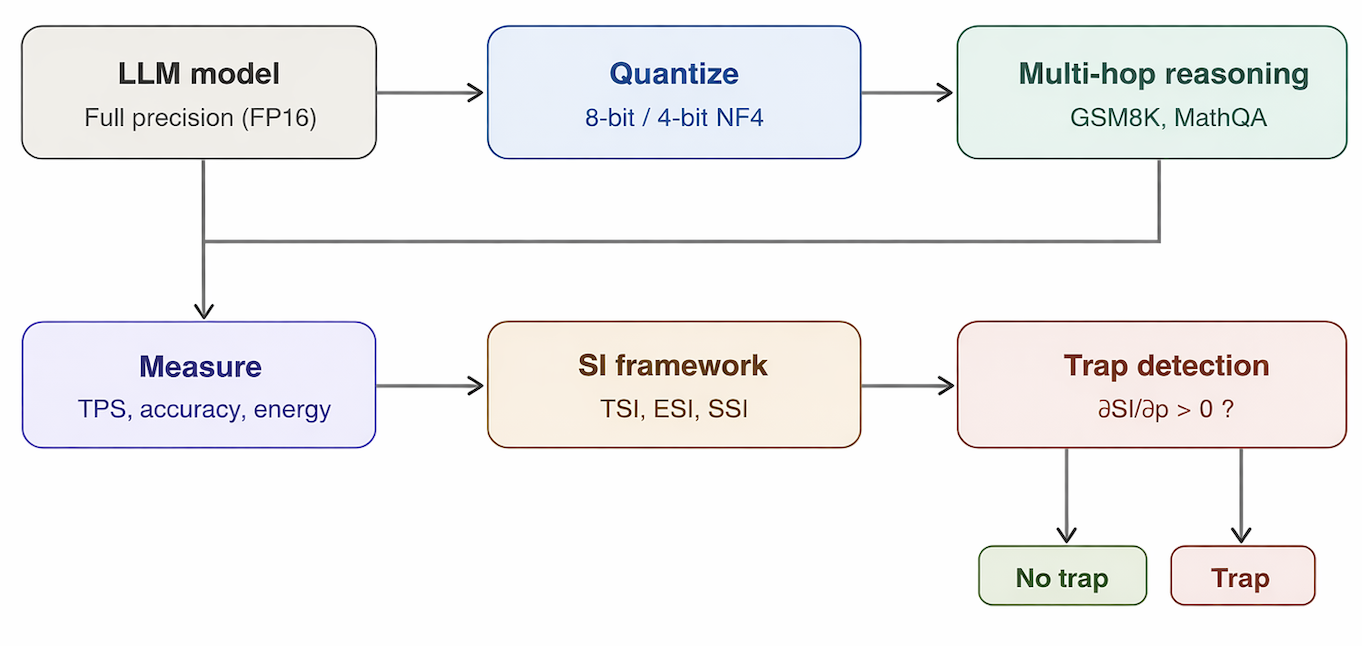}
\caption{Overview of the evaluation pipeline. An LLM is quantized to lower precision, evaluated on multi-hop reasoning benchmarks, and measured across three dimensions (throughput, accuracy, energy). The SI framework aggregates these into Trust, Economic, and Energy pillars to detect the Quantization Trap ($\partial \text{SI}/\partial p > 0$).}
\label{fig:fig_0}
\end{figure*}

\section{The Sustainability Index (SI) Framework}

\textit{The Calculus of Sustainable AI:} To transcend qualitative heuristics and systematically detect structural anomalies such as the \textit{Quantization Trap}, we formalize the evaluation of an AI system as a mapping $\mathcal{M}: \Theta \to \mathcal{V}$. Here, $\Theta$ denotes the manifold of architectural configurations (e.g.,
hardware $h$,
precision (bit-width) $p$, etc.) and $\mathcal{V} \subset [0, 1]^3$ is a bounded sustainability vector space. For any configuration $\theta \in \Theta$, we define the \textit{Sustainability Vector} $\mathbf{v}(\theta)$ as:
\begin{equation}
    \mathbf{v}(\theta) = \big[ T(\theta), \; E(\theta), \; S(\theta) \big]^\top
\end{equation}
representing the orthogonal dimensions of \textit{Trust}, \textit{Efficiency}, and \textit{Energy}, respectively. Each dimension is normalized against a reference anchor $\theta_{ref}$ (typically the full-precision baseline) to ensure scale-invariance across heterogeneous hardware.

\subsection{Mathematical Axiomatization of the Pillars}

\paragraph{I. The Trust Sustainability ($T_{SI}$).} 
In our framework, Trust is not a monolithic attribute but a \textit{context-dependent manifold} defined by the functional requirements of a specific AI task topology $\psi$. We define a set of normative metrics $\mathcal{M} = \{m_1, m_2, \dots, m_k\}$, where the relevant metric $m \in \mathcal{M}$ is selected based on the risk profile of the domain. The Trust index $T_{SI}(\theta)$ is the preservation ratio of the primary metric relative to the anchor:
\begin{equation}
    T_{SI}(\theta) = \frac{ \mathcal{G} \big( \mathbf{m}(\theta) \big) }{ \mathcal{G} \big( \mathbf{m}(\theta_{\text{ref}}) \big) } \in [0, 1]
\end{equation}
where $\mathbf{m}$ is the vector of observed performance metrics and $\mathcal{G}(\cdot)$ is a task-specific aggregation operator (e.g., accuracy, fairness-constrained utility) determined by the task topology $\psi$.

\textit{Application to Reasoning:} In logical synthesis tasks (e.g., GSM8K), we define $\mathcal{G}(\cdot)$ strictly as the reasoning accuracy. This constraint reflects the domain requirement that trust is contingent on the model's ability to maintain deductive consistency and arrive at a verifiable ground-truth solution.
    
\textit{Example:} Evaluating Mistral-7B on an A100 GPU, the FP16 anchor achieves an accuracy of $0.4314$, while the 4-bit (NF4) variant yields $0.3942$. This results in a trust score of $T(\theta) = 0.3942 / 0.4314 = 0.914$, quantifying an $8.6\%$ loss in deductive fidelity.

\paragraph{II. Economic Sustainability ($E_{SI}$)} 
We quantify economic sustainability by aggregating efficiency at two distinct scales: (i) \textit{Local Algorithmic Efficiency} (measured in Tokens Per Second, TPS), which rewards the model's raw computational speed; and (ii) \textit{Global Resilience} (measured in Peak VRAM, $V_{\text{peak}}$), which penalizes high hardware barriers to ensure deployment viability.

 $E_{SI}$ is defined as the Weighted Harmonic Mean of the normalized metrics. Let $\eta = \text{TPS}(\theta)/\text{TPS}_{\text{ref}}$ be the throughput gain and $\rho = V_{\text{ref}}/V_{\text{peak}}(\theta)$ be the residency gain. The index is defined as:
\begin{equation}
    E_{SI}(\theta) = \left( \frac{\alpha}{\eta} + \frac{1-\alpha}{\rho} \right)^{-1}, \quad \alpha \in [0,1]
\end{equation} where $\alpha=0.5$ by default.
Unlike arithmetic or geometric means, the harmonic formulation is \textit{bottleneck-sensitive}. It strictly penalizes deficiencies in either dimension, preventing 'extreme-case bias' where a model might achieve high throughput simply by consuming excessive memory (or vice-versa). By explicitly including $\rho$, our economic sustainability index  penalizes the systemic \textit{'compute divide'} \cite{ahmed2020compute_divide}.

\paragraph{III. The Energy Pillar ($S_{SI}$).} 
We define the environmental cost as the integrated power draw over the inference duration, mapped to its carbon-equivalent impact. Let $P(t)$ be the instantaneous power consumption (in Watts) tracked via hardware-level telemetry (e.g., NVML \cite{nvml}). The energy-per-query $\mathcal{E}_q$ is the temporal integral of power divided by the sample count $N$:
\begin{equation}
    \mathcal{E}_q(\theta) = \frac{1}{N} \int_{0}^{\mathcal{D}} P(t) \, dt \approx \frac{\sum_{t=1}^{\mathcal{D}} P_t \cdot \Delta t}{N}
\end{equation}
where $\mathcal{D}$ is the total execution latency. To account for geographical variability in energy production, we define the \textit{Carbon-Adjusted Energy Score (CAES)}, denoted as $
    \chi(\theta) = \mathcal{E}_q(\theta) \times \gamma $, where $\gamma$ is the grid carbon intensity ($\text{kgCO}_2\text{e}/\text{kWh}$ \cite{schmidt2021}).

The Energy Sustainability index $S_{SI}(\theta)$ is the inverse normalized ratio of CAES relative to the anchor:
{\small
\begin{equation}
S_{SI}(\theta)
= \min\left(1,\;
\frac{\log\!\bigl(1+\chi(\theta_{\text{ref}})\bigr)}
     {\log\!\bigl(1+\chi(\theta)\bigr)}
\right).
\end{equation}}

\textit{Energy-per-Query Interpretation.} This formulation ensures that smaller values of $S_{SI}(\theta)$ denote an \textit{environmental sustainability deficit} (higher emissions). Notably, if models are evaluated within the same grid infrastructure, the carbon intensity $\gamma$ becomes a constant scalar that cancels out during normalization. We omit the logarithmic transform in this context because environmental impact accumulates linearly; a linear ratio preserves sensitivity to extreme energy spikes (e.g., distinguishing $10\times$ from $100\times$ deficits) that logarithmic scaling would misleadingly compress. In such cases, the index reduces to a pure energy-per-query ratio ($\mathcal{E}_{\text{ref}} / \mathcal{E}_{\theta}$), allowing Joules per Query ($J/Q$) to serve as a direct proxy for environmental impact.

\textit{$S_{SI}$ Example (Same grid).} With the FP16 model  as the anchor with energy per query: $E_{\text{anchor}}=519.79$ J/query, the 4-bit (NF4) model has $E_{\text{model}}=1110.56$ J/query, giving $S_{SI}=0.468$, suggesting worse energy emission

\textit{Energy calculation for multi-hop reasoning.} We quantify $S_{SI}$ via Hardware-Latency Integration (HLI): $\mathcal{E}_q = (\mathcal{P}_{\text{TDP}} \cdot \mathcal{D}) / N$, where $\mathcal{P}_{\text{TDP}}$ is the architectural power anchor and $\mathcal{D}$ is high-resolution latency and $N$ is the sample size in the duration. Unlike NVML, which underestimates energy by misinterpreting sequential logic-dispatch micro-stalls as "idle savings" \cite{huang2020evaluating, henderson2020towards}, HLI treats the GPU as a committed resource throughout the reasoning duration. Because multi-hop reasoning’s high-frequency 'sawtooth' power profile exceeds NVML’s sampling frequency \cite{kumbhare2020gpu}, stochastic sensors smear transient, sub-millisecond de-quantization spikes into inter-token gaps. By integrating peak power over total latency, HLI accurately captures the energy deficit driven by casting overhead, exposing the true depth of energy use in multi-hop reasoning.

\subsection{The Sustainability Manifold: Aggregation and Scaling Laws}

We define the global \textit{Sustainability Index (SI)} as the scalar projection of the vector $\mathbf{v}(\theta)$ through a policy function $\mathcal{F}: \mathbb{R}^3 \to [0, 1]$. While traditional scaling laws focus on a 1D trajectory (e.g., bits vs. loss), our framework evaluates model configurations as points on a 3D \textit{Sustainability Manifold}.

In this work, we instantiate $\mathcal{F}$ as a \textit{Compensatory (Linear) Policy}, defined as:

\begin{equation}
    SI(\theta) = \mathbf{w}^\top \mathbf{v}(\theta), \quad \text{where } \sum_{i \in \{T, E, S\}} w_i = 1
\end{equation}

This formulation enables us to quantify the marginal rate of substitution between Economic efficiency, Energy, and Trust within the global sustainability score. While the weight vector $\mathbf{w}$ can be tuned to reflect specific task priorities, we adopt a balanced configuration for reasoning tasks ($w_T=0.34, w_E=0.33, w_S=0.33$) to ensure equal prioritization of all three pillars.

\begin{remark} While we employ a linear formulation to map the reasoning trade-off manifold, we acknowledge that safety-critical domains may necessitate a bottleneck-intolerant geometric aggregation ($SI = \prod v_i^{w_i}$) to prevent efficiency gains from masking catastrophic failure.

We note, however, 
that the detection of the Quantization Trap is \emph{robust} 
to weight selection: because FP16 Pareto-dominates all 
quantized configurations ($T_{SI}$, $E_{SI}$, and $S_{SI}$ 
are simultaneously higher), any convex combination with 
$w_i > 0$ preserves the same ranking. In deployment-specific 
contexts, practitioners may adjust weights to reflect 
operational priorities (e.g., $w_T \gg w_E$ for 
safety-critical reasoning, $w_S \gg w_T$ for 
energy-constrained edge deployment) without altering 
the trap diagnosis.
\end{remark}

\paragraph{The Scaling Monotonicity Hypothesis}
The prevailing scaling paradigm relies on the \textit{Monotonicity Hypothesis}: the assumption that reducing precision $p$ yields a monotonic increase in the scalar utility of the AI system, i.e., $\frac{\partial SI}{\partial p} < 0$. 

In multi-hop reasoning, however, reducing $p$ causes the Trust ($T_{SI}$) and Energy ($S_{SI}$) indices to degrade (see the next Section). This forces their partial derivatives with respect to precision to be positive ($\frac{\partial T_{SI}}{\partial p}, \frac{\partial S_{SI}}{\partial p} > 0$). When these penalties outweigh the marginal economic gains of compression ($\frac{\partial E_{SI}}{\partial p} < 0$), the global gradient $\nabla_{\!p} SI$ flips sign, mathematically detecting the \textit{Scaling Law Divergence} inherent in a Quantization Trap.

\begin{proposition}[Scaling Law Divergence]
Let $SI(\theta)$ be a function of precision (bit-width) $p$ for a configuration $\theta \in \Theta$. A \textit{Quantization Trap} is identified when: $
    \frac{\partial SI}{\partial p} > 0 
$ signifying a fundamental breakdown of the linear scaling law. (\textit{proof in appendix})
\label{prop:slb}
\end{proposition}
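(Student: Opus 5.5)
The proposition is partly definitional: it declares that a Quantization Trap is \emph{identified} by the sign condition $\partial SI/\partial p > 0$. The substantive content to prove is that this sign condition is equivalent to the failure of the Monotonicity Hypothesis. I would also show when it follows from the pillar-wise behaviour described just before the statement. My plan is to treat $p$ as a continuous parameter, or as a finite difference between adjacent bit-widths (e.g.\ $16\to 8\to 4$), and to differentiate the compensatory policy $SI(\theta)=\mathbf{w}^\top\mathbf{v}(\theta)$ term by term.

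\textbf{Step 1: linear decomposition.} Since $\mathcal{F}$ is linear with fixed weights, we have
\begin{equation*}
\frac{\partial SI}{\partial p} = w_T\frac{\partial T_{SI}}{\partial p} + w_E\frac{\partial E_{SI}}{\partial p} + w_S\frac{\partial S_{SI}}{\partial p}.
\end{equation*}
The Monotonicity Hypothesis asserts that this quantity is negative. Hence $\partial SI/\partial p>0$ is exactly its negation, excluding the degenerate zero case. This yields the ``breakdown'' statement directly.

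\textbf{Step 2: a sufficient condition in terms of the pillars.} Suppose that, in the multi-hop regime, $\partial T_{SI}/\partial p>0$ and $\partial S_{SI}/\partial p>0$, while $\partial E_{SI}/\partial p<0$. Then the sign flips precisely when
\begin{equation*}
w_T\frac{\partial T_{SI}}{\partial p}+w_S\frac{\partial S_{SI}}{\partial p} > w_E\left|\frac{\partial E_{SI}}{\partial p}\right|.
\end{equation*}
For the discrete version I would replace the derivatives by differences relative to the anchor $\theta_{\mathrm{ref}}$. At the anchor every pillar equals $1$, and each term is then a deficit $1-v_i(\theta)$ or a gain $v_i(\theta)-1$.

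\textbf{Step 3: robustness to the choice of weights.} In the Pareto-dominated case, where every pillar is non-increasing as $p$ decreases and at least one strictly decreases, all three partials are nonnegative with one positive. Then any weights $w_i>0$ give $\partial SI/\partial p>0$. This matches the Remark on weight-robustness and gives a weight-free criterion.

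\textbf{Main obstacle.} The step I expect to be hardest is justifying the pillar-wise signs, in particular $\partial E_{SI}/\partial p$ and $\partial S_{SI}/\partial p$. These depend on the casting-overhead latency model, and that model is only developed later in the paper. I would therefore state the result conditionally on those signs. I would also note that, at the level of the proposition, the only rigorous claim is the equivalence in Step 1 together with the sufficient conditions in Steps 2 and 3.
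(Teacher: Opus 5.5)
Your proposal is correct and follows essentially the same route as the paper's proof: the paper writes $SI=\sum_i w_i v_i$ with $w_i>0$, contrasts the sign with the standard regime $\partial SI/\partial p<0$, and notes that divergence occurs when the pillar gradients combine to a positive weighted sum. Your version is slightly more careful on one point: the paper also asserts that divergence means the system is strictly dominated by its high-precision counterpart, which does not follow from the sign of the weighted sum alone, whereas your Step 3 correctly treats Pareto dominance only as a sufficient condition for the sign flip, not a consequence of it.
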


\begin{figure*}[t]
    \centering    \includegraphics[width=0.99\textwidth]{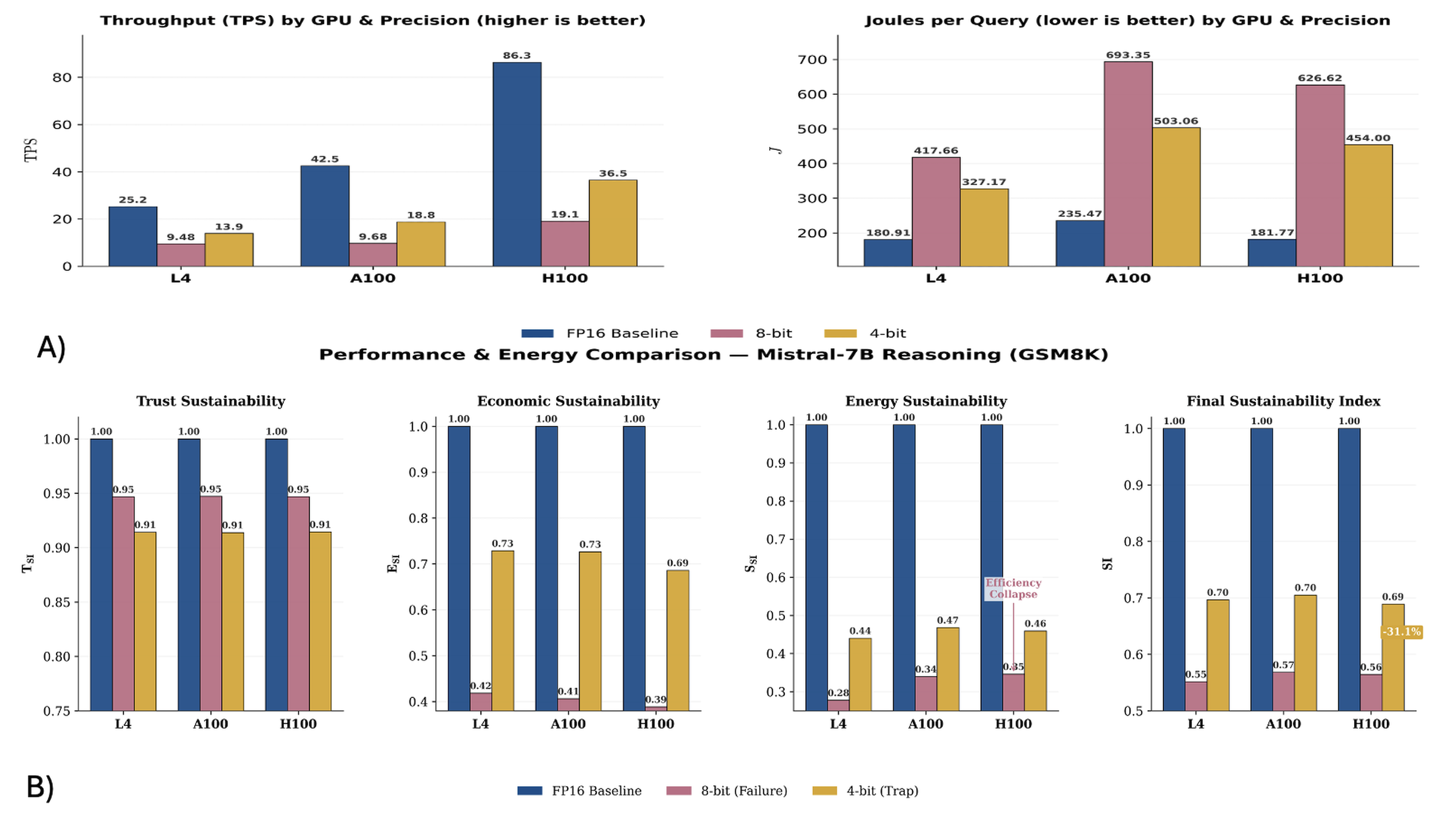}
    \caption{Sustainability Inversion (Mistral-7B, GSM8K).(A) Physical Telemetry: Reducing precision to 8/4-bit across L4/A100/H100 triggers non-monotonic throughput collapse and energy spikes; FP16 is more efficient due to software-emulated casting overhead. (B) Sustainability Manifold: Indices ($T_{SI}, E_{SI}, S_{SI}$) show low-bit configurations are Pareto-dominated. 8-bit represents systemic failure, while 4-bit marks a "Quantization Trap" with a 31.1\% global SI deficit relative to the FP16 anchor}
    \label{fig:fig1}
\end{figure*}

\section{Scaling Law Breaking in reasoning}

In this section,  we demonstrate that for multi-hopping reasoning tasks, the relationship between bit-precision and efficiency and energy is not merely sub-linear, but non-monotonic, leading to a structural 'Sustainability Inversion': the scaling law is  broken in a Quantization Trap.

\subsection{Multi-Hop Reasoning and casting overhead}

Multi-hop reasoning is distinguished from standard autoregressive generation by its requirement for \textit{deductive fidelity} across a sequence of $K$ intermediate logical states, $\{h_1, h_2, \dots, h_K\}$. In this regime, each reasoning "hop" $h_k$ is a logical precursor to $h_{k+1}$, creating a dependency chain where the probability of reaching the correct ground-truth solution $y$ is the joint product of successful intermediate transitions:
\begin{equation}
    P(y \mid x, \theta) = \prod_{k=1}^{K} P(h_k \mid h_{<k}, x, \theta)
\end{equation}

This sequential structure makes reasoning uniquely sensitive to Scaling Law Divergence through two primary mechanisms under quantization.

\textit{Deductive Fragility.} Quantization errors compound across sequential hops, where a minor shift in an early premise amplifies into "deductive drift" and logical divergence. This may cause reasoning Trust to decrease more rapidly than in single-turn tasks, driving the positive trust gradient $\frac{\partial T_{SI}}{\partial p} > 0$.

\textit{Temporal Accumulation.}
Total latency $\mathcal{D}_{\text{total}} \approx \sum_{k=1}^{K} (\tau_{\text{comp}} + \tau_{\text{cast}})$ aggregates native compute time $\tau_{\text{comp}}$ and de-quantization overhead $\tau_{\text{cast}}$ across all $K$ sequential logical hops. Here, a \textit{hop} is the atomic autoregressive forward pass (token generation). This cumulative penalty bottlenecks throughput and inflates power draw, systematically degrading both the economic ($E_{SI}$) and energy ($S_{SI}$) sustainability pillars.

\paragraph{Logical vs. Atomic Hops.} 
While the literature \cite{cobbe2021} often defines "multi-hop reasoning" by its $M$ high-level logical milestones (e.g., 8–9 arithmetic operations), these milestones are physically materialized on hardware through a much longer sequence of $K$ \textit{atomic hops} (autoregressive token generations), where $K \gg M$, specifically in the context of multi-hop reasoning \cite{pope2023}. We anchor our analysis at the \textit{atomic hop} level, because it is the fundamental unit of hardware execution where the casting overhead $\tau_{\text{cast}}$ is incurred and compounded.

\paragraph{Casting Overhead Ratio (COR):}  We formalize the hardware bottleneck via defining the \textit{Casting Overhead Ratio ($COR$)}"  the ratio of de-quantization latency to native compute time:
\begin{equation}
    COR = \frac{\tau_{\text{cast}}}{\tau_{\text{comp}}}
\end{equation}
The $COR$ serves as a diagnostic for \textit{Casting Overhead Dominance}; when $COR > 1.0$, the de-quantization "tax" outweighs the actual computational work. We have following lemma and proposition related to the $COR$ (\textit{proof in appendix}).

\begin{lemma}[Average Latency per hop]
Let $TPS$ be the tokens per second. The average latency per hop is $\tau = 1/TPS$.
\end{lemma}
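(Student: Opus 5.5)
The plan is to treat the lemma as a statement about averages over a decoding run, built directly from the definition of the atomic hop in the Temporal Accumulation paragraph. There a hop is one autoregressive forward pass, which emits exactly one token. So for a run that produces $K$ tokens in total latency $\mathcal{D}_{\text{total}}$, the number of hops equals the number of generated tokens.

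First, I will write the measured throughput as $TPS = K/\mathcal{D}_{\text{total}}$, which is how tokens per second is computed from telemetry (tokens generated divided by wall-clock generation time). Second, I will use the decomposition
\[
\mathcal{D}_{\text{total}} = \sum_{k=1}^{K} \tau_k ,
\]
where $\tau_k = \tau_{\text{comp},k} + \tau_{\text{cast},k}$ is the latency of hop $k$. The sequential dependency $h_k \to h_{k+1}$ means hops cannot overlap in time within a single chain, so these latencies add with no parallel discount. Third, I will define the average per-hop latency as $\tau = \frac{1}{K}\sum_{k=1}^{K}\tau_k = \mathcal{D}_{\text{total}}/K$. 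Substituting the throughput identity then gives $\tau = 1/TPS$ immediately.

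The main point needing care is the modeling assumption, not the algebra. The identity requires that $\mathcal{D}$ in the throughput measurement is the same quantity as the sum of hop latencies. Any prefill time or idle gaps must either be counted inside some hop or be excluded from both quantities consistently. For batched decoding, each forward pass yields $B$ tokens, so the identity holds per sequence (tokens per second per stream) rather than in aggregate. I would state the result under this per-sequence convention, so that batch throughput $B\cdot TPS$ does not falsely shrink the per-hop latency. With that convention fixed, the lemma is an exact identity for the mean. This mean is exactly what is needed to later estimate $\tau_{\text{comp}}$ and $\tau_{\text{cast}}$, and hence $COR$, from FP16 versus quantized throughput.
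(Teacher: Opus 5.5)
Your proposal is correct and takes the same route as the paper. The paper lets $K$ be the number of atomic hops (tokens) produced in a query of total duration $T$, writes $TPS = K/T$ and $\tau = T/K$, and substitutes to get $\tau = 1/TPS$. Your sum-of-hop-latencies decomposition and your per-sequence convention for batching are refinements the paper does not spell out, since its proof treats a single query; note, though, that the paper's later $N^*$ proof reads $\tau = 1/TPS$ as the batch-amortized per-token latency rather than the per-stream one you chose.
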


{\small\begin{proposition}[$COR$ Approximation]
Given a high-precision reference model (ref) and a quantized model ($p$), the Casting Overhead Ratio is given by:
\begin{equation}
    COR \approx \frac{TPS_{\text{ref}}}{TPS_p} - 1
\end{equation}
\end{proposition}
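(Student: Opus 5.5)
The approach is to model per-hop latency additively and then apply the preceding Lemma (Average Latency per hop) to both the reference and the quantized configuration.

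First, fix the modelling assumptions that are implicit in the Temporal Accumulation decomposition. The reference (FP16/BF16) model executes natively, so each atomic hop costs only native compute, $\tau_{\text{ref}} = \tau_{\text{comp}}$, with no casting term. The quantized model at precision $p$ performs the same arithmetic in native precision after de-quantization, so its per-hop latency is $\tau_p = \tau_{\text{comp}} + \tau_{\text{cast}}$. This is the per-hop version of $\mathcal{D}_{\text{total}} \approx \sum_{k=1}^{K}(\tau_{\text{comp}}+\tau_{\text{cast}})$. The key approximation is that $\tau_{\text{comp}}$ is the same in both configurations.

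Second, apply the Lemma to each side: $\tau_{\text{ref}} = 1/TPS_{\text{ref}}$ and $\tau_p = 1/TPS_p$. Then
\[
COR = \frac{\tau_{\text{cast}}}{\tau_{\text{comp}}} = \frac{\tau_p - \tau_{\text{ref}}}{\tau_{\text{ref}}} = \frac{\tau_p}{\tau_{\text{ref}}} - 1 = \frac{TPS_{\text{ref}}}{TPS_p} - 1 .
\]
I would also note the consistency checks. If there is no casting overhead and the throughputs are equal, then $COR = 0$. The case $COR > 1$ corresponds exactly to $TPS_p < TPS_{\text{ref}}/2$, which links the diagnostic threshold to observable telemetry.

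The main obstacle is justifying why this is only an approximation ($\approx$) and not an identity. The equal-compute assumption ignores the fact that low-bit weights reduce memory traffic, so the native-compute component of the quantized model may actually be smaller than the reference's. The right-hand side is therefore really a \emph{net} overhead ratio: it gives the casting cost after subtracting any bandwidth savings, which makes it a lower bound on the true $\tau_{\text{cast}}/\tau_{\text{comp}}$ whenever $\tau_{\text{comp},p} \le \tau_{\text{comp},\text{ref}}$. The Lemma also equates average latency with per-hop latency, which neglects prefill and other non-hop costs.

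I would handle both points by writing $\tau_{\text{comp},p} = \tau_{\text{comp},\text{ref}}(1-\delta)$ and showing that the error in $COR$ is $O(\delta)$. I would then state the result under the assumption that $\delta$ is small relative to $COR$ in the casting-dominated regime.
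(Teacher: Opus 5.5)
Your proposal is correct and follows the paper's proof essentially step for step. The paper likewise takes $\tau_p = \tau_{\text{comp}} + \tau_{\text{cast}}$ with an invariant $\tau_{\text{comp}}$, sets $\tau_{\text{ref}} = \tau_{\text{comp}}$ by neglecting casting in the reference, rewrites $COR = \tau_p/\tau_{\text{ref}} - 1$, and invokes the Lemma $\tau = 1/TPS$; your $\delta$-analysis of the equal-compute assumption and your reading of the right-hand side as a net overhead ratio go beyond the paper, and the latter usefully explains the negative COR values it reports later (e.g.\ AWQ, Qwen2.5-32B).
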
}

\subsection{Evaluation Models, Data and Hardware} 
To probe the limits of scaling laws, we evaluate two primary reasoning-optimized models: \textit{Mistral-7B-v0.3}, a high-performance model with strong deductive capabilities, and \textit{Qwen-3-0.6B}, an ultra-thin architecture designed for extreme resource-constrained deployment. We further corroborate our findings using \textit{Falcon-3B} \cite{falcon3} to ensure that the observed \textit{Quantization Trap} is a consistent phenomenon across mid-tier parameter scales.

\textit{GSM8K and MathQA \cite{mathqa} benchmarks:} These models are tested on \textit{GSM8K} (7.5K training / 1K test samples), consisting of multi-step grade-school arithmetic, and \textit{MathQA} (29.8K training / 2.9K test samples), which focuses on complex algebraic transformationsc. 

GSM8K consists of 8.5K high-quality, multi-step elementary school math word problems. Unlike simple question-answering, GSM8K requires models to generate a \textit{Chain-of-Thought} (CoT) involving 2 to 8 arithmetic reasoning steps to arrive at a final verifiable answer. It serves as our primary probe for \textit{arithmetic noise propagation}, where a single precision-induced error in an early intermediate step (e.g., a carry-over mistake) inevitably causes the entire logical chain to diverge from the ground truth.

MathQA demands significantly higher deductive depth in reasoning. MathQA stress-tests \textit{Scaling Law Divergence} through complex algebraic transformations.  Its extended logical chains amplify both Casting Overhead accumulation and the compounding effects of \textit{quantization noise}.

\textit{Quantization Protocol.}
All experiments employ \emph{weight-only} quantization 
with activations retained in FP16 (W4A16 and W8A16). 
Specifically, we use bitsandbytes~\cite{dettmers2024qlora} 
NF4 with double quantization for 4-bit and LLM.int8() 
for 8-bit inference. This targets the memory-bound 
autoregressive decoding regime dominant in reasoning 
tasks at low batch sizes ($B \approx 1$). We note that 
weight-activation quantization (e.g., W4A4, W8A8) 
addresses a distinct, compute-bound serving regime 
and is outside the scope of this work. Our SI framework 
is method-agnostic and can evaluate any quantization 
backend; bitsandbytes was selected as the most widely 
deployed library in practice (default in HuggingFace 
Transformers).

\textit{Hardware Instrumentation.} We mainly conduct our experiments on NVIDIA L4, A100, and H100 GPUs, spanning the Ada Lovelace, Ampere, and Hopper architectures, respectively. These platforms represent the dominant compute paths in contemporary AI deployment, ensuring that our characterization of Scaling Law Divergence reflects the physical reality of modern hardware constraints.

\subsection{Sustainability Inversion: the Quantization
Trap}

\paragraph{Mistral-7B Reasoning on GSM8K.}
Figure 1 summarizes the physical telemetry and sustainability analysis for Mistral-7B on GSM8K across L4, A100, and H100 GPUs. These results reveal a distinct \textit{Quantization Trap}, where the linear scaling law is fundamentally broken.

\begin{figure*}[!htpb]
    \centering
\includegraphics[width=0.99\linewidth]{fig_2.png}
     \caption{\small The Size Paradox (Qwen3-0.6B). (A) 4-bit quantization triggers universal reasoning collapse regardless of architecture. (B) High 8-bit COR values (2.5–2.8$\times$) prove casting overhead dominance is the mechanical driver of inefficiency. (C) Telemetry reveals that low-precision "optimization" paradoxically results in a 400\% energy penalty}
    \label{fig:fig2}
\end{figure*}

\textit{Physical Proof of Scaling Law Divergence.} 
The empirical results in Figure 1(B) provide a direct physical validation of Proposition \ref{prop:slb}. While the traditional scaling law assumes a negative gradient ($\frac{\partial SI}{\partial p} < 0$), where reducing precision should yield higher utility, the Final Sustainability Index (SI) exhibits a monotonic increase as precision $p$ is restored from 8-bit to 16-bit. Across all three architectures (L4, A100, H100), we observe $SI(\text{FP16}) > SI(\text{4-bit}) > SI(\text{8-bit})$. On the H100, for instance, restoring precision from 4-bit to 16-bit yields a 31.1\% improvement in global sustainability. This positive gradient, $\frac{\partial SI}{\partial p} > 0$, signifies that for multi-hop reasoning, the "smaller is better" hypothesis is not only sub-optimal but mathematically inverted; increasing precision is the only path to a more efficient and sustainable system.

\textit{The Energy Paradox ($S_{SI}$).} Under standard scaling laws, energy consumption is expected to decrease with bit-precision ($J/\text{query} \propto p$). However, our telemetry reveals a massive \textit{sharp energy increase} in low-bit models. On the A100, the FP16 baseline represents the \textit{Efficiency Peak} (235.47~J), while transitioning to 8-bit and 4-bit (NF4) increases energy consumption to 693.35~J and 503.06~J, respectively. This environmental deficit is most severe on the H100, where the 8-bit energy index $S_{SI}$ drops to 0.35, representing a critical deficit in sustainability.

\textit{Decay of Deductive Trust ($T_{SI}$).} We observe a steady decay in the Trust pillar as precision decreases, reflecting the "deductive drift" inherent in multi-hop chains. On the A100, reasoning accuracy falls from 43.14\% (FP16) to 40.86\% (8-bit) and 39.42\% (4-bit). While the logic remains semi-coherent, the compounding quantization noise ensures that low-bit models are less reliable for reasoning tasks.

\textit{Economic Viability Decrease ($E_{SI}$).} Contrary to expectations, reducing precision triggers a non-monotonic throughput collapse across all architectures (Figure 1(A)). This occurs because the hardware-level \textit{Casting Overhead Ratio} (COR) becomes the primary execution bottleneck; the micro-latency of de-quantizing weights outweighs the theoretical gains in memory bandwidth. Consequently, the Economic index $E_{SI}$ for 8-bit configurations falls to $\approx 0.40$, signaling a total collapse in operational scalability for reasoning-heavy workloads.

\subsubsection{ The Size Paradox: Ultra-Thin Models}

One might dismiss the \textit{Scaling law breaking in Quantization Trap} as a byproduct of the massive memory bandwidth requirements inherent to 7B-scale models; however, by evaluating the ultra-thin Qwen-3 (0.6B), we reveal a \textit{Size Paradox}: the trap is not merely a symptom of model scale, but an inescapable architectural law that becomes structurally deeper as the parameter footprint shrinks.

As illustrated in Figure ~\ref{fig:fig2}, the 0.6B model provides the definitive evidence for Scaling Law Divergence on the GSM8K data. In Panel A, we observe a terminal \textit{Reasoning Collapse} at 4-bit precision; while 8-bit maintains trust, the 4-bit variant exhibits a catastrophic drop in reasoning accuracy across all hardware. 

The mechanical cause is isolated in Panel B. At the 0.6B parameter scale, $\tau_{\text{comp}}$ becomes negligible. At the 0.6B scale, the actual matrix math is so minimal that it finishes within the GPU's fixed latency floor for kernel dispatch \citep{pope2023}. So the de-quantization overhead ($\tau_{\text{cast}}$) becomes the absolute dominant factor in the execution pipeline \citep{zhao2024atom} On the A100, the \textit{Casting Overhead Ratio ($COR$)} for 8-bit logic reaches an astounding 2.74, meaning the system spends nearly $3\times$ more energy emulating precision than performing deduction. 
 
 This results in the energy anomaly seen in Panel C, where the 8-bit model consumes over $1700$J per query: a 400\% increase over the FP16 baseline. These results prove that for small-scale reasoners (e.g., Qwen3-0.6B), the industry-standard "smaller-is-better" heuristic is not just sub-optimal, but \textit{counterproductive.}

\begin{figure*}[t]
    \centering
    \includegraphics[width=0.90\textwidth]{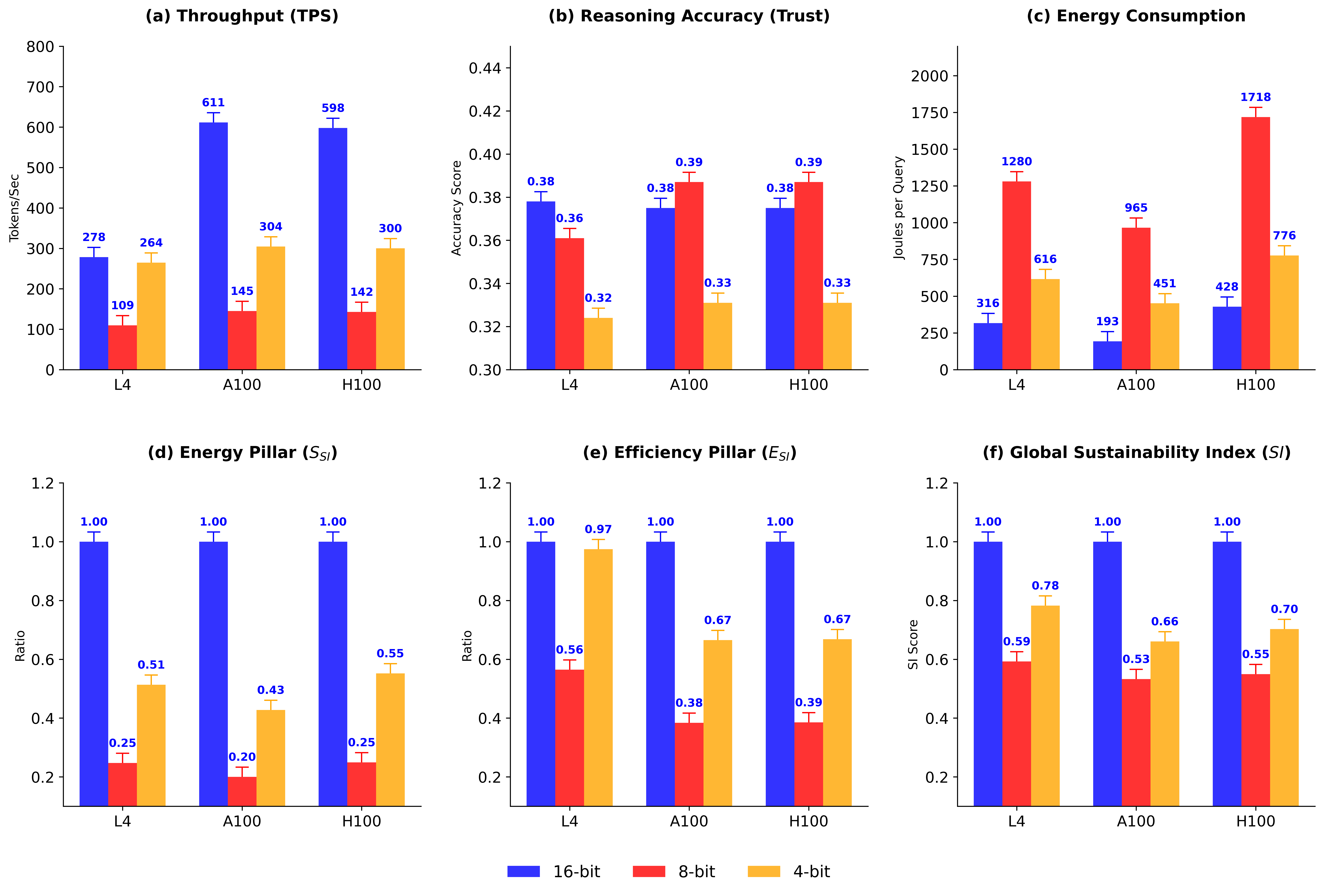}
    \caption{\small \textit{Sustainability Inversion of Qwen3-0.6B reasoning on MathQA}(a–c) Physical telemetry reveals throughput collapse and energy spikes; FP16 is $4.2\times$ more efficient than 8-bit on H100. (d–f) Sustainability indices expose systemic failure ($SI \approx 0.55$) in quantized models due to unamortized casting overhead. FP16 strictly Pareto-dominates all configurations across L4, A100, and H100, proving brute-force bit-reduction constitutes a Quantization Trap}
    \label{fig:fig3}
\end{figure*}

\textbf{Scaling law breaking of Qwen-3 (0.6B) reasoning on MathQA data.}To verify if higher logical complexity further compounds the hardware-level tax, we transition the ultra-thin Qwen-3 (0.6B) to the MathQA benchmark, subjecting its minimal parameter footprint to significantly deeper deductive chains.

\textit{Hardware Latency Dominance.} As illustrated in Figure ~\ref{fig:fig3}, the throughput collapse remains pervasive even at the sub-1B scale. On the A100, the 16-bit baseline achieves 611 TPS, whereas the 8-bit and 4-bit configurations are throttled to 145 and 304 TPS, respectively. This confirms that for ultra-thin reasoners, the "actual math" executes faster than the hardware's irreducible kernel-launch latency \cite{pope2023}, leaving the serial de-quantization instructions as the absolute bottleneck and forcing $COR > 1.0$.

\textit{The Trust Divergence.}
On the MathQA benchmark, which demands rigorous algebraic transformations, the 4-bit variant exhibits a terminal logical divergence. While 8-bit precision maintains reasoning fidelity, the 4-bit accuracy collapses from 0.38 to 0.33 across all GPU architectures (Figure 3b). This 13\% relative decay in accuracy proves that aggressive quantization is fatal for complex multi-hop tasks, where minor precision-induced errors in early premises inevitably derail long deductive paths.

\textit{Terminal Sustainability Inversion.}
Most critically, the Energy Paradox reaches its terminal phase on high-end architectures. On the H100 (Figure 3c), the 8-bit model consumes 1718J per query, over 4$\times$ the energy footprint of the 16-bit baseline (428J). Consequently, the Global Sustainability Index ($SI$) for all low-bit configurations remains strictly below 0.80 (Figure 3f). This provides the definitive proof of the \textit{Quantization Trap}: for ultra-thin models, compression is a mathematically irrational strategy that consumes more power to deliver strictly inferior logical output.

\begin{figure*}[t]
    \centering    \includegraphics[width=0.95\textwidth]{fig_4aa.png}
    \caption{\small \textit{Cross-Architectural Trap Evidence.} (A) Throughput analysis locates $B^* \approx 64$ for Falcon3, while Mistral-7B remains terminally trapped ($B^* > 128$). (B) $COR > 1.0$ identifies \textit{Casting Dominance} as the mechanical bottleneck. (C) A scale-invariant $\sim$30\% logic collapse proves that batch-driven efficiency gains fail to restore reasoning trust.}
    \label{fig:fig4}
\end{figure*}

\subsection{The scaling law breaking Driver: Amortization Failure}

The scaling law breaking in Quantization Trap is a mechanistic consequence of Amortization Failure within the GPU’s execution pipeline. \textit{On architectures (L4/A100/H100), Tensor Cores lack a native hardware logic chain for sub-8-bit arithmetic \cite{frantar2022gptq, lin2023awq}}, necessitating the de-quantization of weights back into native precision ($FP16$) before any mathematical operation can occur. 

This architectural void creates a high \textit{Casting Overhead Ratio (COR).} For Mistral-7B reasoning on GSM8K, the COR on H100 hardware reaches a staggering 2.45 for 8-bit precision and 1.50 for 4-bit precision. This implies that for every joule spent on actual inference, the system wastes up to 2.45 joules simply translating bits.


In high-concurrency scenarios, this overhead is "amortized" across the batch. However, multi-hop reasoning is operationally forced into a $B=1$ regime to satisfy sequential logic dependencies. In this state, the COR is never shared. While 4-bit (NF4) achieves a 75\% bandwidth benefit, the fixed COR ensures that the cumulative casting energy $\Phi = \sum_{k=1}^K COR$ grows linearly with the reasoning depth $K$, eventually outpacing all memory savings. We formalize this breakdown through the following theorem (\textit{proof in Appendix}).
{\small\begin{theorem}[Sequential Amortization Failure in multi-hop reasoning]
Let $\mathcal{E}(p, B)$ be the energy-per-query as a function of precision $p$ and batch size $B$. For a multi-hop reasoning task on hardware with native precision $\pi > p$, there exists a \textit{Critical Batch Threshold} $B^*$ such that for all $B < B^*$, the system enters a \textit{Quantization Trap} defined by: $\frac{\partial \mathcal{E}}{\partial p} < 0.$

This inversion occurs when the marginal casting overhead $\phi$ outpaces the amortized bandwidth savings, where $B^*$ is uniquely determined by:
\begin{equation}
    B^* = \alpha \frac{(\pi - p)}{\phi(h, p)}
\end{equation}
Here, $\alpha$ is the hardware energy constant for memory movement and $\phi(h, p)$ is the de-quantization cost for $p$ on architecture $h$.
\label{thm:Bstar}
\end{theorem}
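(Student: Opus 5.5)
The plan is to make the statement precise through an explicit per-query energy model, and then reduce the trap condition to a sign computation on $\partial\mathcal{E}/\partial p$.

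\emph{Energy model.} Use the Temporal Accumulation decomposition $\mathcal{D}_{\text{total}} \approx \sum_{k=1}^{K}(\tau_{\text{comp}}+\tau_{\text{cast}})$ together with HLI, $\mathcal{E}_q=\mathcal{P}_{\text{TDP}}\mathcal{D}/N$. For a batch of size $B$ and $K$ atomic hops, write the per-query energy as a memory-movement part plus a casting part:
\begin{equation*}
\mathcal{E}(p,B)=K\Big[\frac{\alpha p + c_0}{B} + \frac{\phi(h,p)\,\mathbf{1}[p<\pi]}{B}\cdot B_{\text{eff}}(B) + c_1\Big].
\end{equation*}
The modelling assumptions behind this form are the following.
\begin{itemize}
\item Weight traffic scales linearly in the bit-width $p$ with constant $\alpha$, and is shared across the $B$ queries of a batch.
\item The de-quantization cost $\phi$ is paid per hop.
\item $c_0, c_1$ are precision-independent costs (kernel launch, activations).
\end{itemize}
The modelling choice I commit to is the one suggested by the preceding paragraph. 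Bandwidth savings amortize over the batch, because weights are loaded once per step. The casting tax, by contrast, does not amortize when the sequential dependency forces serialization. I will therefore write the casting contribution per query as $\phi$ in the low-batch regime.

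\emph{Comparison against native precision.} Precision $p$ is discrete, so I will interpret $\partial\mathcal{E}/\partial p$ as a finite difference between $p$ and the native precision $\pi$ (where $\phi=0$). Concretely, $\mathcal{E}(\pi,B)-\mathcal{E}(p,B) = K\big[\alpha(\pi-p)/B - \phi(h,p)\big]$. The trap condition "energy decreases as precision increases" then becomes
\begin{equation*}
\frac{\alpha(\pi-p)}{B} < \phi(h,p) \iff B < \alpha\frac{\pi-p}{\phi(h,p)} = B^*.
\end{equation*}
The positive factor $K$ cancels. This cancellation is worth noting: the \emph{location} of $B^*$ is independent of reasoning depth, whereas the \emph{magnitude} of the deficit grows linearly in $K$, which matches the $\Phi=\sum_k COR$ remark.

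\emph{Uniqueness.} The map $B\mapsto \alpha(\pi-p)/B-\phi$ is strictly decreasing and has a single root. This gives existence and uniqueness of $B^*$, and shows that the inequality holds for all $B<B^*$.

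\emph{Linking to observables.} To connect $\phi$ to measurements, I will express $\phi$ through the Casting Overhead Ratio. By the lemma on average latency per hop and the $COR$ Approximation proposition, $\tau_{\text{cast}}\approx \tau_{\text{comp}}\,COR = (TPS_{\text{ref}}/TPS_p - 1)/TPS_{\text{ref}}$, so $\phi = \mathcal{P}_{\text{TDP}}\,\tau_{\text{cast}}$. This makes $B^*$ computable from telemetry and yields the observed $B=1$ trap whenever $\phi > \alpha(\pi-p)$.

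\emph{Main obstacle.} The hardest step is justifying the modelling assumption that casting cost does not amortize with $B$, while memory traffic does. In a real kernel, de-quantization is also done once per weight tile per step. Without an assumption such as $\phi$ being charged per query (or decaying slower than $1/B$), the two terms scale identically and no threshold exists. I would state this explicitly as a hypothesis, e.g.\ a casting cost $\phi B^{-\beta}$ with $\beta<1$. I would then redo the computation, which gives $B^*=(\alpha(\pi-p)/\phi)^{1/(1-\beta)}$, and recover the stated formula at $\beta=0$. I would also flag that "$\partial/\partial p$" is to be read as this finite difference.
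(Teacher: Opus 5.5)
Your route is essentially the paper's. It uses the same per-hop functional $\mathcal{E}(p,B)=K[\gamma+\alpha p/B+\phi(h,p)]$; your $c_0/B+c_1$ plays the role of $\gamma$ and cancels. It equates low-bit and native energy to get $B^*=\alpha(\pi-p)/\phi$, and your finite-difference reading is a cleaner version of that step than the paper's extra derivative argument.

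\textbf{The failing step.} It is where you turn the crossing into a regime. The inequality $\alpha(\pi-p)/B<\phi(h,p)$ is equivalent to $B>\alpha(\pi-p)/\phi=B^*$, not $B<B^*$. Your own uniqueness argument shows this. The map $g(B)=\alpha(\pi-p)/B-\phi$ is strictly decreasing with root $B^*$, so $g>0$ for $B<B^*$. That means $\mathcal{E}(\pi,B)>\mathcal{E}(p,B)$ there, and quantization \emph{saves} energy at small batch. For instance, take $\alpha=1$, $\pi=16$, $p=4$, $\phi=3$, so $B^*=4$. At $B=1$ the bracketed low-bit energy is $\gamma+7$ against $\gamma+16$ native, whereas at $B=8$ it is $\gamma+3.5$ against $\gamma+2$. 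Your $B=1$ remark contradicts your conclusion for the same reason: $\phi>\alpha(\pi-p)$ means $B^*<1$, so $B=1$ lies above the threshold. The $\beta$-variant likewise gives a trap only for $B>(\alpha(\pi-p)/\phi)^{1/(1-\beta)}$.

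\textbf{Why this is structural.} This is not a typo, and your ``main obstacle'' paragraph already contains the reason. A threshold exists only if the bandwidth saving and the casting tax scale differently in $B$, and the one that decays more slowly wins at \emph{large} $B$. Under your committed assumption (saving $\propto 1/B$, tax unamortized), the tax wins for large batches, so the trap region is $B>B^*$. A trap for small $B$ needs the reverse scaling: casting shared across the batch, as the paper assumes in its Amortization--Trust Decoupling proof, with a saving that is not amortized. But then the threshold is $\phi/(\alpha(\pi-p))$, not the stated $B^*$. So the stated $B^*$ together with the direction $B<B^*$ cannot be derived from this model.

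\textbf{The paper has the same inversion.} Its proof says ``When $B<B^*$, low-bit model will spend more energy.'' Its derivative step is reversed too: $\alpha/B$ is largest at small $B$, so $|\partial\phi/\partial p|>\alpha/B$ holds only for $B>\alpha/|\partial\phi/\partial p|$. Matching the paper therefore does not close the gap.

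\textbf{Secondary issue.} Calibrating $\phi$ from COR conflates gross and net cost. The quantity $TPS_{\text{ref}}/TPS_p-1$ measures the casting tax minus the bandwidth saving, because the COR approximation treats $\tau_{\text{comp}}$ as precision-invariant. Inserting it into $B^*$ therefore counts the $\alpha(\pi-p)/B$ term twice.
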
}

\textit{The Inevitability of the Quantization Trap.} Theorem~\ref{thm:Bstar} mathematically guarantees the Inevitability of the Quantization Trap when batch size $B<B^{*}$ the Critical Batch Threshold. Under the situation, the multi-hop reasoning is structurally forced into the low-batch regime by two opposing mandates.

1.  VRAM Residency (Large Models): Frontier models consume the majority of available GPU memory for weight storage. To maintain the memory-intensive cache required for long reasoning chains without triggering out-of-memory errors, the batch size must be kept at a minimum.

2.  Latency Mandates (Small Models): Compact models are primarily deployed in interactive, real-time contexts. Waiting to accumulate a large batch would violate the strict latency requirements of the agent, forcing the system to process requests individually.

These constraints prevent the amortization of hardware casting overhead, ensuring that reasoning caught in \textit{Quantization Trap.} Therefore, the empirical 'Sustainability Inversion' observed is not a stochastic anomaly, but a \textit{Systemic AI Failure}, a deterministic breakdown of the scaling law.

\textit{Scaling Law Divergence in multi-hop reasoning:} Building on the physical constraints of the critical batch threshold $B^*$, we synthesize the hardware energy deficit with the compounding requirements of logical trust to formally prove that Scaling Law Divergence is a structural necessity in multi-hop reasoning in the following theorem (\textit{proof in Appendix}).

\begin{theorem}[Scaling Law Divergence in Multi-Hop Reasoning]
For multi-hop reasoning tasks on hardware with native precision $\pi > p$, the Scaling Law Divergence ($\frac{\partial SI}{\partial p} > 0$) is structurally unavoidable for batch sizes $B < B^*$.
\label{thm:reasoning_divergence}
\end{theorem}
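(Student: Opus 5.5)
The plan is to decompose the global index along its three pillars and sign each partial derivative separately. Since
\[
SI(\theta)=w_T T_{SI}(\theta)+w_E E_{SI}(\theta)+w_S S_{SI}(\theta)
\]
with $w_i>0$, we have
\[
\frac{\partial SI}{\partial p}=w_T\frac{\partial T_{SI}}{\partial p}+w_E\frac{\partial E_{SI}}{\partial p}+w_S\frac{\partial S_{SI}}{\partial p}.
\]
It therefore suffices to show that, for $B<B^*$, each of the three derivatives is nonnegative and at least one is strictly positive. This also sidesteps the weighting question raised in the Remark: once all three pillars move the same way, no convex combination can reverse the sign.

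\emph{Energy pillar.} On a common grid, $S_{SI}=\min(1,\mathcal{E}_{\text{ref}}/\mathcal{E}(p,B))$. Theorem~\ref{thm:Bstar} gives $\partial\mathcal{E}/\partial p<0$ for all $B<B^*$. Differentiating the ratio then yields
\[
\frac{\partial S_{SI}}{\partial p}=-\frac{\mathcal{E}_{\text{ref}}}{\mathcal{E}^2}\,\frac{\partial \mathcal{E}}{\partial p}>0
\]
wherever the cap is inactive, i.e.\ for every $p<\pi$ in the trap regime.

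\emph{Trust pillar.} I would use the hop factorization $P(y\mid x,\theta)=\prod_{k=1}^K P(h_k\mid h_{<k},x,\theta)$. Model each per-hop success probability as $1-\epsilon_k(p)$, where $\epsilon_k$ is nonincreasing in $p$; this is the standard assumption that quantization noise variance decreases with bit-width. Accuracy is then $\prod_k(1-\epsilon_k(p))$, which is nondecreasing in $p$. Its log-derivative, $-\sum_k \epsilon_k'(p)/(1-\epsilon_k)$, grows with $K$, so the effect is strictly positive for nontrivial depth. Hence $\partial T_{SI}/\partial p\ge 0$.

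\emph{Economic pillar, the main obstacle.} Here the residency gain $\rho=V_{\text{ref}}/V_{\text{peak}}(p)$ genuinely improves as $p$ decreases, so $E_{SI}$ is not automatically monotone in the favorable direction. I would attack it first through throughput. By the Lemma, $\tau=1/TPS$, and by the $COR$ Proposition, $TPS_{\text{ref}}/TPS_p=1+COR$. In the $B<B^*$ regime, the per-hop cast cost $\phi$ exceeds the amortized bandwidth savings, which is the same inequality that defines $B^*$ in Theorem~\ref{thm:Bstar} with latency in place of energy. This gives $COR>0$ and hence $\eta<1$, with $\eta$ increasing in $p$.

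The remaining difficulty is the tradeoff inside the harmonic mean. I would not try to prove $\partial E_{SI}/\partial p\ge 0$ outright. Instead I would bound it: $\rho$ is at most $\pi/p$, and its derivative is $O(1/p^2)$ independent of $K$. By contrast, the trust and energy gains scale with the depth $K$, through the accumulated cost $\Phi=\sum_k COR$ and the log-accuracy sum. This yields a sufficient condition of the form
\[
w_T\,\partial_p T_{SI}+w_S\,\partial_p S_{SI}>w_E\,\bigl|\partial_p E_{SI}\bigr|.
\]
This condition holds for $K$ beyond a modest threshold, which multi-hop reasoning guarantees since $K\gg M$.

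The conclusion $\partial SI/\partial p>0$ follows. This matches the paper's "penalties outweigh the marginal economic gains" formulation of Proposition~\ref{prop:slb}. The delicate step is making the $K$-scaling comparison precise without over-assuming the form of $V_{\text{peak}}(p)$.
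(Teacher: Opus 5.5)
Your three-pillar decomposition, the energy step via Theorem~\ref{thm:Bstar}, and the hop-product trust step match the paper's proof. The paper then simply asserts $\partial E_{SI}/\partial p>0$, on the grounds that restoring precision removes the de-quantization bottleneck and raises throughput. It never mentions the residency term $\rho$. You are right that $\rho$ pulls the other way, so the economic pillar is a real hole. But your repair does not close it. Your sufficient condition $w_T\,\partial_p T_{SI}+w_S\,\partial_p S_{SI}>w_E\,|\partial_p E_{SI}|$ rests on the claim that the trust and energy gains grow with $K$, and neither does.

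\emph{Energy.} Every pillar is a ratio to the anchor evaluated on the same chain, so $K$ cancels. With the energy functional from the proof of Theorem~\ref{thm:Bstar}, below the cap you get $S_{SI}=(\gamma+\alpha\pi/B)/(\gamma+\alpha p/B+\phi(h,p))$, which has no $K$ in it. Likewise $\eta$ is a per-hop TPS ratio. The linear growth of $\Phi=\sum_k COR$ is growth in absolute joules, not in $S_{SI}$ or its $p$-derivative.

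\emph{Trust.} You computed the derivative of $\log T_{SI}$, but what enters $\partial SI/\partial p$ is $\partial_p T_{SI}=T_{SI}\,\partial_p\log T_{SI}$. Take $\epsilon_k\equiv\epsilon$ and set $r(p)=(1-\epsilon(p))/(1-\epsilon(\pi))<1$ for $p<\pi$. Then $T_{SI}=r^K$ and $\partial_p T_{SI}=K\,r^{K-1}r'(p)\to 0$ as $K\to\infty$. Long chains flatten the trust gradient rather than amplify it.

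So your displayed inequality is an extra hypothesis on the constants, implied neither by $B<B^*$ nor by $K\gg M$. Nothing in $B<B^*$ rules out the following configuration: $\partial_p\mathcal{E}$ only barely negative, $\epsilon$ nearly flat in $p$, and a large residency gain. In that case $\partial SI/\partial p<0$.

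To actually prove the statement you must control $\rho$ directly. One route inside the paper's own model is its KV-cache decomposition $V_{\text{peak}}=\beta p+M_{KV}(L)$, with the KV cache unquantized. This gives $\partial_p\rho=-\beta(\beta\pi+M_{KV})/(\beta p+M_{KV})^2\to 0$ as the context grows. For long chains, $\partial_p E_{SI}$ is then governed by $\eta$ alone, and all three pillar derivatives are nonnegative. Alternatively, make explicit the paper's implicit assumption that the throughput term dominates the harmonic mean.

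Either way, note that $COR>0$ only gives $\eta<1$ at a given $p$. The claim that $\eta$ increases in $p$ is a separate derivative condition: $|\partial_p\phi|$ must exceed the marginal bandwidth term. This is the same level-versus-derivative distinction that arises in step 3 of the proof of Theorem~\ref{thm:Bstar}.
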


\subsection{ Amortization  and Quantization Trap Mitigation}

Theorem \ref{thm:Bstar} provides an increasing batch size way to decrease the casting overhead ratios. To mitigate the effects of the Quantization Trap, we investigate the scaling of batch size ($B$) as a mechanism to amortize the hardware-level tax. We subject Falcon3-3B (Base and Instruct)\cite{falcon3} and Mistral-7B architectures to increasing batch workloads to locate the Critical Batch Threshold ($B^*$) and analyze the recovery dynamics of the sustainability pillars on Blackwell RTX6000-pro.

Figure ~\ref{fig:fig4} illustrates the physical and reasoning response to batch scaling across three distinct architectures. The results reveal a fundamental tension between hardware throughput and logical fidelity:

\textit{Casting Dominance and Amortization.} Figure ~\ref{fig:fig4} Column B identifies the Casting Overhead Ratio ($COR$) as the Quantization trap's primary mechanical driver. At real-time scales ($B \le 16$), all models reside in the \textit{Casting Dominance} regime ($COR > 1.0$), where the GPU spends more energy emulating precision formats than executing logical deduction. While increasing $B$ amortizes this hardware tax per Theorem~\ref{thm:Bstar}, Column A reveals that 8-bit configurations remain terminally slower than 16-bit baselines even at $B=128$.

\textit{The $B^*$ Disparity.} The threshold $B^*$ marks the "sustainability equator" where quantization finally yields net gains. While Falcon3-3B reaches crossover at $B^* \approx 64$ (\textit{Recoverable Trap}), the larger Mistral-7B remains terminally trapped ($B^* > 128$). This scale-dependent disparity proves that de-quantization overhead becomes more persistent as parameter counts grow, fundamentally breaking the scaling law for workstation-class workloads. 

\textit{The Half-Mitigation Paradox.} Figure ~\ref{fig:fig4} Column C reveals that while batching repairs throughput and energy deficits, it is functionally incapable of restoring logical fidelity. For Falcon3-3B-Instruct, the 30\% logic collapse remains stagnant across all batch sizes because quantization noise $\epsilon$ is a per-token stochastic penalty that cannot be amortized. This identifies a terminal paradox: batching may make a quantized model "fast," but it cannot make it "accurate," rendering aggressive compression a failed strategy for high-fidelity reasoning. We have the following theorem to address this.

\begin{theorem}[Amortization-Trust Decoupling]
Let $\mathcal{E}(p, B)$ denote the energy-per-query and $T(p, B)$ denote the reasoning accuracy (Trust) for a multi-hop reasoning task at precision $p$ and batch size $B$. For any $p < \pi$ (native precision):
\begin{enumerate}
    \item \textit{Efficiency Recovery:} The hardware casting overhead is monotonically non-increasing with batch size, such that $\frac{\partial COR}{\partial B} \le 0$ and $\lim_{B \to \infty} \frac{\partial \mathcal{E}}{\partial B} < 0$.
    \item \textit{Trust Invariance:} The reasoning accuracy is invariant to batch size, such that $\frac{\partial T}{\partial B} = 0$.
\end{enumerate}
\label{thm:thm4}
\end{theorem}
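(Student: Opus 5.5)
The plan is to work inside the same stylized cost model that underlies Theorem~\ref{thm:Bstar}, with the batch dependence made explicit. For one atomic hop at batch size $B$, I will write the latency as $\tau(p,B) = \tau_{\text{fix}}(p) + \phi(h,p) + B\,\tau_{\text{tok}}$. Here $\tau_{\text{fix}}(p)$ is the weight-streaming term, which scales with the bits moved, i.e.\ proportional to $\alpha p$. The term $\phi(h,p)$ is the per-hop de-quantization cost. It is paid once per weight tile, so it is independent of $B$, and it vanishes at $p=\pi$. The last term $B\,\tau_{\text{tok}}$ is the per-sequence compute in native precision. Since the work is shared across $B$ sequences, each query sees the amortized quantities $\phi/B$ and $\tau_{\text{fix}}/B$ plus a constant term.

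For part~1, I will use the COR Approximation proposition together with the Average Latency per hop lemma. The batch throughput is $TPS(p,B)=B/\tau(p,B)$, which gives
\[
COR(B) \approx \frac{\tau(p,B)}{\tau(\pi,B)} - 1 .
\]
Differentiating in $B$ shows that $\partial COR/\partial B$ has the sign of $\tau_{\text{tok}}\bigl[\tau(\pi,B)-\tau(p,B)\bigr]$ up to a positive factor. In the trap regime $B<B^*$ we have $\tau(p,B)\ge \tau(\pi,B)$, so this derivative is $\le 0$. If instead quantization already wins, I will just note that COR is clipped at $0$, and the monotonicity then holds trivially. For the energy claim, I will use HLI, $\mathcal{E}=\mathcal{P}_{\text{TDP}}\,\tau(p,B)\,K/B$. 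This gives
\[
\frac{\partial \mathcal{E}}{\partial B} = -\mathcal{P}_{\text{TDP}} K\bigl(\tau_{\text{fix}}+\phi\bigr)/B^2 ,
\]
which is strictly negative for every finite $B$. I will read the stated limit condition as the assertion that this derivative stays negative along the whole sequence $B\to\infty$, while its magnitude decays like $B^{-2}$.

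For part~2, I will use the factorization $P(y\mid x,\theta)=\prod_k P(h_k\mid h_{<k},x,\theta)$. Each factor depends only on the quantized weights $\hat W_p$ and the sequence's own prefix. Batching is an embarrassingly parallel map over sequences: with causal masking and per-sequence KV caches, row $i$ of each batched matmul equals the unbatched computation for sequence $i$. So the logits for every sequence are the same function of $(\hat W_p, h_{<k}, x)$ for every $B$. The quantization noise $\epsilon=\hat W_p-W$ is a property of the weights, not of the schedule. Hence the per-query success probability, and therefore its expectation $T(p,B)$, does not depend on $B$, which gives $\partial T/\partial B=0$.

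I expect the main obstacle to be this exact invariance. In floating point, batched GEMMs can change reduction order and tiling, so logits may differ at the ulp level. Under greedy decoding this could flip an argmax. I would handle it by stating the theorem in exact arithmetic, or up to a deterministic-kernel assumption, and then treating reduction-order effects as a perturbation of order $O(u)$. That perturbation is independent of $p$ and negligible next to the quantization error $\|\epsilon\|$. This keeps the decoupling claim meaningful: batching reduces the $\phi/B$ cost but leaves the trust deficit $T(p)<T(\pi)$ unchanged.
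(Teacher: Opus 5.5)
Your proposal is correct. Its energy and trust steps match the paper's, but the COR step takes a genuinely different route.

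The paper works with the definition $COR=\tau_{\text{cast}}/\tau_{\text{comp}}$ directly. It takes $\tau_{\text{cast}}=a_{\text{cast}}(p)$ as a fixed per-hop cost and $\tau_{\text{comp}}=a_{\text{comp}}(p)\,B$. Then $COR=a_{\text{cast}}/(a_{\text{comp}}B)$ is nonnegative and decreasing for every $p$, with no case analysis. You instead go through the throughput approximation $COR\approx\tau(p,B)/\tau(\pi,B)-1$, using a latency model that also credits the bandwidth saving $\tau_{\text{fix}}(\pi)-\tau_{\text{fix}}(p)$. This is more faithful to the paper's own data: your COR can be negative, like the AWQ and 32B measurements. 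It also shows that batching really shrinks $|COR|$ toward parity like $1/B$. When quantization already wins, the signed COR \emph{increases} in $B$, which is exactly why you needed the clipping convention.

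Two remarks on that step. First, in your model $\tau(p,B)-\tau(\pi,B)=\tau_{\text{fix}}(p)+\phi-\tau_{\text{fix}}(\pi)$ does not depend on $B$. So your ``trap regime $B<B^*$'' is really a $B$-independent dichotomy. Second, you can drop the case split entirely by applying the definition inside your own model. The quantity $\phi/(\tau_{\text{fix}}(p)+B\tau_{\text{tok}})$ is non-increasing for all $p$, and it reduces to the paper's formula when $\tau_{\text{fix}}=0$.

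Your HLI energy computation is the paper's $K(e_{\text{comp}}+e_{\text{cast}}/B)$ with $e_{\text{cast}}=\mathcal{P}_{\text{TDP}}(\tau_{\text{fix}}+\phi)$. You are right that the literal limit claim fails in either model, because $\partial\mathcal{E}/\partial B=O(B^{-2})\to 0$. The paper's proof likewise only establishes $\partial\mathcal{E}/\partial B<0$ at each finite $B$, which is how its appendix restates the theorem.

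For trust, the paper simply posits an ``Assumption of Semantic Independence,'' $\hat y=f(x,p)$. Your row-independence argument (causal masks, per-sequence KV caches) supplies the justification the paper omits. Your floating-point caveat also correctly locates where the assumption can fail. Note, though, that the exact equality $\partial T/\partial B=0$ rests on the exact-arithmetic or batch-invariant-kernel assumption, not on the $O(u)$ estimate. Under greedy decoding, a single flipped argmax propagates through the whole chain, so a small logit perturbation does not imply a small accuracy change.
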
 

\begin{table*}[!htpb]
\centering
\small
\caption{Cross-backend validation (Mistral-7B, GSM8K, A100). AWQ resolves the efficiency trap (COR$<0$), but accuracy degradation persists, confirming Theorem~4.5.}
\label{tab:cross-backend}
\begin{tabular*}{\textwidth}{@{\extracolsep{\fill}} l c c c c c @{}}
\toprule
\textbf{Backend} & \textbf{Acc (\%)} & \textbf{TPS} & \textbf{E/Q (J)} & \textbf{COR} & \textbf{SI} \\
\midrule
FP16 Baseline     & \textbf{43.14} & 44.6          & 311.7          & 0.00  & 1.000 \\
AWQ 4-bit         & 40.71          & \textbf{45.9} & \textbf{184.7} & $-$0.03 & 0.981 \\
bnb-NF4 4-bit     & 39.42          & 36.5          & 503.1          & 0.22  & 0.845 \\
\bottomrule
\end{tabular*}
\end{table*}

\subsubsection{Cross-Backend Validation of Theorem 4.5}
\label{sec:cross-backend}

A key question is whether optimized quantization kernels 
can fully eliminate the Quantization Trap. To test this, 
we evaluate AWQ~\cite{lin2024awq} (fused GEMM kernels) 
alongside bitsandbytes (bnb) NF4 on Mistral-7B, using 
the full GSM8K dataset (1{,}319 samples) on an A100-80GB 
GPU. Here, we utilize NVML for energy monitoring, which 
differs from our primary Hardware-Latency Integration 
(HLI) approach. This alternative methodology serves to 
validate the Quantization Trap under different energy 
measurement conditions.

\begin{figure*}[!htpb]
\centering
\includegraphics[width=\textwidth]{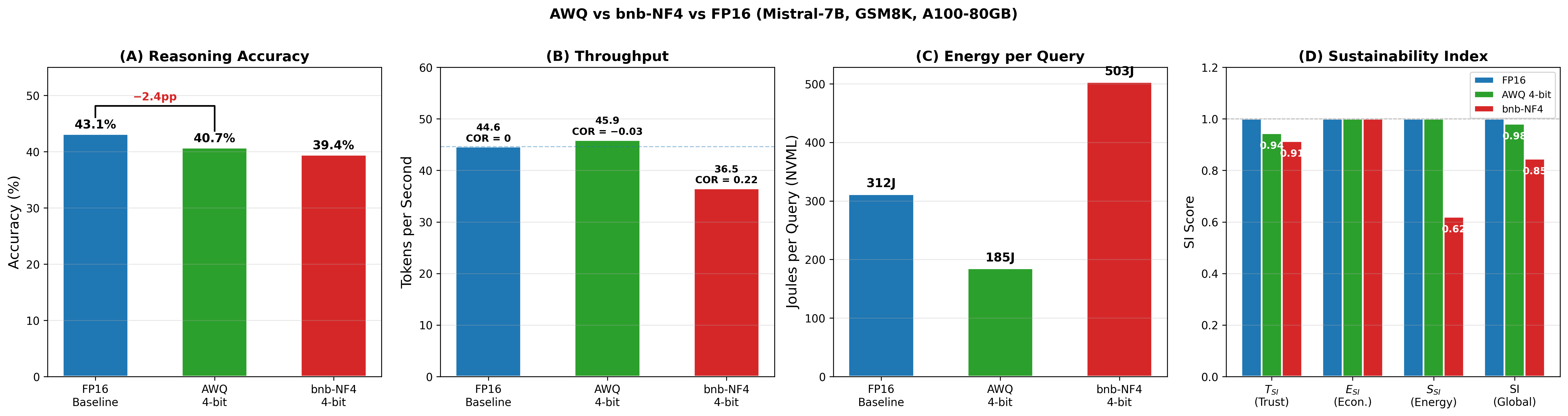}
\caption{Cross-backend validation of Theorem~4.5. (A)~Accuracy degrades under both 4-bit backends. (B)~AWQ achieves COR$<$0 (faster than FP16). (C)~AWQ uses 41\% less energy than FP16. (D)~SI decomposition: AWQ restores $E_{SI}$ and $S_{SI}$ to ceiling but $T_{SI}$ remains degraded.}
\label{fig:cross-backend}
\end{figure*}
As shown in Table~\ref{tab:cross-backend} and 
Figure~\ref{fig:cross-backend}, AWQ achieves 
COR~$=-0.03$ (faster than FP16) and consumes 41\% 
less energy, effectively eliminating the efficiency 
and energy components of the trap. However, reasoning 
accuracy remains 2.4~pp below the FP16 baseline 
(40.71\% vs.\ 43.14\%), with both 4-bit backends 
exhibiting similar deductive degradation (AWQ: 40.71\%, 
bnb-NF4: 39.42\%). 

This confirms that accuracy loss in multi-hop reasoning is an inherent property of the 
reduced bit-width, independent of kernel implementation, 
empirically validating Theorem~4.5 
($\partial T / \partial B = 0$). While optimized backends 
narrow the Quantization Trap operationally, \textit{the system 
remains trapped in the Trust dimension}, underscoring the 
diagnostic necessity of our multidimensional SI framework 
over single-scalar metrics.

\subsection{Per-Hop Stratification: Evidence of Sequential Deductive Fragility}
\label{sec:per-hop}

To empirically validate that quantization noise compounds sequentially, we stratify the GSM8K test set (1,319 samples) by the number of arithmetic steps required for the ground-truth solution. If \textit{Sequential Deductive Fragility} holds, accuracy degradation should scale disproportionately with reasoning depth.

\begin{table*}[t]
\centering
\small
\caption{Per-hop accuracy stratification (Mistral-7B, GSM8K). The FP16$\to$4-bit relative loss nearly doubles from 1--2 to 3--4 steps, providing direct evidence of cumulative quantization noise.}
\label{tab:per-hop}
\begin{tabular*}{\textwidth}{@{\extracolsep{\fill}} l r c c c c c @{}}
\toprule
\textbf{Depth} & \textbf{$n$} & \textbf{FP16 (\%)} & \textbf{8-bit (\%)} & \textbf{4-bit (\%)} & \textbf{Gap (pp)} & \textbf{Rel.\ Loss} \\
\midrule
1--2 steps & 427 & 57.4 & 56.0 & 53.4 & 4.0 & 6.9\% \\
3--4 steps & 663 & 43.0 & 39.7 & 38.0 & 5.0 & 11.6\% \\
5--6 steps & 199 & 18.6 & 20.6 & 20.1 & \multicolumn{2}{c}{\textit{floor effect}} \\
7+ steps   &  30 &  6.7 &  6.7 & 13.3 & \multicolumn{2}{c}{\textit{$n$ too small}} \\
\midrule
\textbf{Overall} & \textbf{1{,}319} & \textbf{43.1} & \textbf{41.3} & \textbf{39.7} & \textbf{3.4} & \textbf{7.9\%} \\
\bottomrule
\end{tabular*}
\end{table*}

\begin{figure*}[!htpb]
\centering
\includegraphics[width=\textwidth]{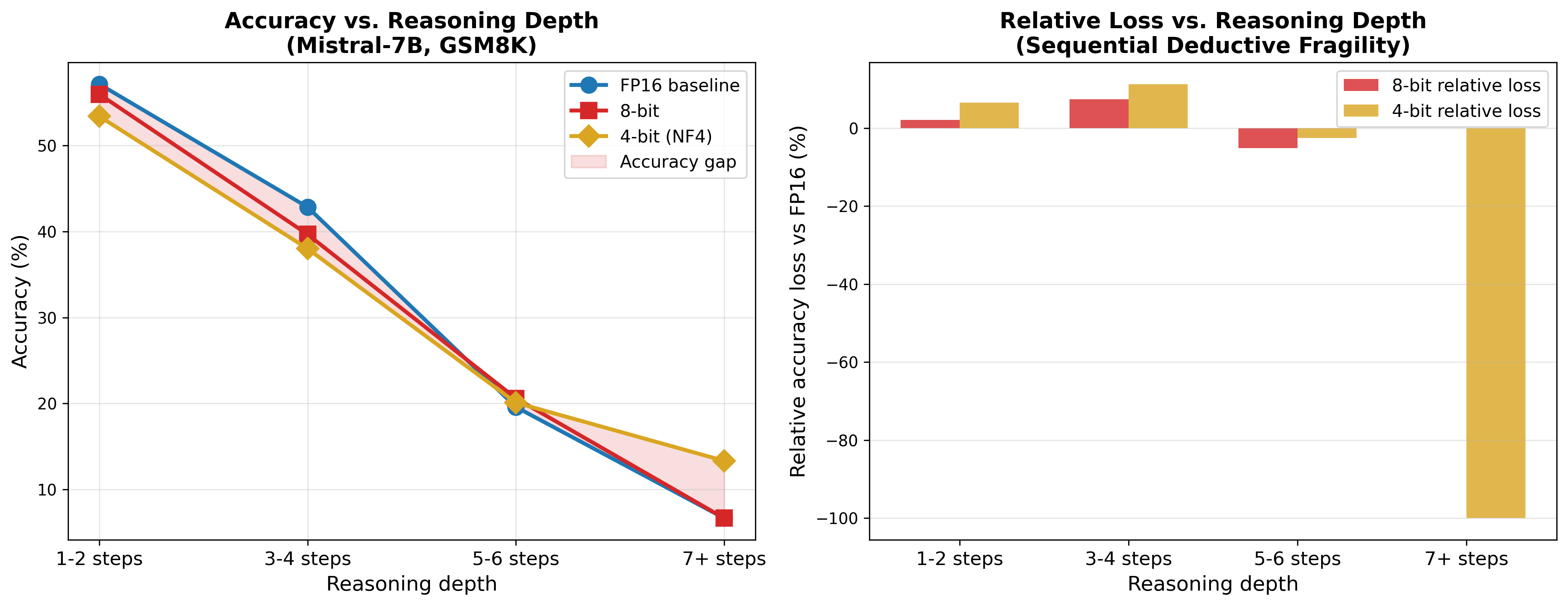}
\caption{Accuracy dynamics by reasoning depth. \textit{Left:} The absolute accuracy gap between FP16 and 4-bit widens as chain length increases. \textit{Right:} Relative loss nearly doubles from 1--2 to 3--4 steps, confirming that quantization errors compound multiplicatively across hops.}
\label{fig:per-hop}
\end{figure*}

\paragraph{Analysis.}
As shown in Table~\ref{tab:per-hop} and Figure~\ref{fig:per-hop}, the data reveals two distinct operational regimes:

\textit{1. The Compounding Regime (1--4 steps).} Within the model's capable reasoning depth (representing 82\% of the dataset), quantization noise visibly compounds. From 1--2 steps to 3--4 steps, the absolute FP16$\to$4-bit gap widens from 4.0 to 5.0 percentage points. More critically, the \textit{relative accuracy loss} nearly doubles from 6.9\% to 11.6\%. This confirms our theoretical mechanism: quantization is not a static per-query penalty, but a multiplicative per-hop cost ($\epsilon$) that fatally accumulates over extended logical chains.

\textit{2. The Entropy Floor (5+ steps).} Beyond 4 steps, the baseline FP16 accuracy collapses below 21\%, hitting the model's intrinsic cognitive ceiling. At this depth, the deductive signal vanishes and performance degrades to stochastic guessing. Consequently, the precision-induced gap is masked by general model failure, rendering low-bit vs. high-bit comparisons mathematically uninformative for this parameter scale.

Finally, the near-doubling of relative loss in the compounding regime provides direct empirical proof of \textit{Sequential Deductive Fragility}. It demonstrates that the Quantization Trap is not merely a byproduct of general task difficulty, but a structural inevitability driven by the sequential accumulation of sub-precision noise (Theorem 4.4).

\subsection{The Quantization Trap at Frontier Scale: 14B--32B Validation and Predictive Theory}
\label{sec:frontier-scale}

A critical question for the practical relevance of the Quantization Trap is whether it persists, deepens, or dissolves as models scale toward frontier deployment sizes. To answer this, we evaluate Qwen2.5-14B and Qwen2.5-32B under the identical 5-shot Chain-of-Thought (CoT) protocol across two hardware platforms (A100-SXM4-40GB and RTX PRO 6000 Blackwell) and GSM8K and MathQA. Our analysis now spans a $53\times$ scale ($0.6\text{B} \to 32\text{B}$) with NVML real-time energy monitoring.

\begin{table*}[t]
\centering
\small
\caption{Quantization results at 14B and 32B scales. The efficiency trap persists at 14B but exhibits batch-dependent dissolution at 32B, empirically validating the critical threshold $N^*$ (Theorem~\ref{thm:dis}).}
\label{tab:frontier}
\begin{tabular*}{\textwidth}{@{\extracolsep{\fill}} l l c c c c c @{}}
\toprule
\textbf{Benchmark} & \textbf{Mode} & \textbf{Acc (\%)} & \textbf{TPS} & \textbf{E/Q (J)} & \textbf{COR} & \textbf{VRAM (GB)} \\
\midrule
\multicolumn{7}{l}{\textit{Qwen2.5-14B --- MathQA, A100-SXM4-40GB}} \\
& 16-bit & 66.8 & \textbf{122.3} & \textbf{354} & 0.00 & 29.1 \\
& 8-bit  & \textbf{70.2} & 45.5 & 459 & 1.69 & 26.3 \\
& 4-bit  & 68.1 & 87.4 & 623 & 0.40 & 26.6 \\
\midrule
\multicolumn{7}{l}{\textit{Qwen2.5-14B --- GSM8K, RTX PRO 6000 Blackwell}} \\
& 16-bit & 76.9 & \textbf{189.4} & \textbf{280} & 0.00 & 31.5 \\
& 8-bit  & 75.0 & 80.7 & 372 & 1.35 & 28.5 \\
& 4-bit  & \textbf{80.4} & 134.8 & 473 & 0.41 & 28.9 \\
\midrule
\multicolumn{7}{l}{\textit{Qwen2.5-32B --- GSM8K, RTX PRO 6000 Blackwell (B=8$\to$16)}} \\
& 16-bit & 71.0 & 32.7 & 1517 & 0.00 & 66.2 \\
& 8-bit  & \textbf{73.3} & 31.3 & \textbf{858} & 0.05 & 36.5 \\
& 4-bit  & 72.9 & \textbf{61.1} & 1070 & $-$0.46 & \textbf{22.1} \\
\midrule
\multicolumn{7}{l}{\textit{Qwen2.5-32B --- MathQA, RTX PRO 6000 Blackwell (B=4$\to$8)}} \\
& 16-bit & 58.2 & \textbf{63.1} & 1055 & 0.00 & 66.8 \\
& 8-bit  & 51.8 & 59.2 & \textbf{642} & 0.07 & 37.5 \\
& 4-bit  & \textbf{64.5} & 36.5 & 2382 & 0.73 & \textbf{22.0} \\
\bottomrule
\end{tabular*}
\end{table*}

\subsubsection{Empirical Findings}

\textit{1. Trust Variance and Stochastic Regularization.}
Unlike the 7B scale where accuracy monotonically decays, the 14B and 32B models exhibit a striking reversal: 4-bit accuracy often \emph{exceeds} the FP16 baseline (e.g., 80.4\% vs.\ 76.9\% on GSM8K-14B). 

We formalize this by modeling multi-hop reasoning as a discrete dynamical system.  Let $W_{16}$ denote the native 16-bit weight matrix of the uncompressed model. Under greedy decoding (temperature $\tau=0$), execution through $W_{16}$ is strictly deterministic. Given a fixed 5-shot CoT prompt, the model traces a rigid logical trajectory that can often become trapped in deterministic failure loops or "hallucination snowballing" \cite{ho2020, zhang2023}, where early logical errors may irreversibly corrupt the sequence. 

We model the quantized weight matrix as $\hat{W}_p = W_{16} + \xi_p$, where $\xi_p \sim \mathcal{N}(0, \sigma^2_p)$ represents the quantization noise matrix. In overparameterized regimes ($\ge 14\text{B}$), $\xi_p$ acts as an implicit regularizer. The higher variance of 4-bit quantization ($\sigma^2_4 > \sigma^2_8$) occasionally induces \textit{Stochastic Resonance}, perturbing the weight matrix just enough to kick the generation out of deterministic failure loops. However, as shown below, this stochastic accuracy gain does not rescue the system from the physical scaling trap.

\textit{2. The Memory Asymptote (Amdahl's Law for VRAM).}
The standard economic justification for compression assumes memory scales linearly with bit-width ($M \propto p$). Yet, Table~\ref{tab:frontier} shows that compressing 14B from 16-bit to 4-bit yields an anemic 8.2\% memory savings (31.5\,GB to 28.9\,GB). 

This occurs because in $k$-shot multi-hop reasoning, total memory $M_{\text{total}}$ is the sum of static weights $M_w(p)$ and the dynamic Key-Value (KV) cache $M_{KV}(L)$, which depends strictly on context length $L$:
\begin{equation}
    M_{\text{total}}(p, L) = \beta \cdot p + M_{KV}(L)
\end{equation}
As $L$ grows during 5-shot CoT generation, $M_{KV}(L)$ aggressively dominates spatial complexity. Consequently, the partial derivative of total memory with respect to precision approaches zero:
\begin{equation}
    \lim_{L \to \infty} \frac{\partial M_{\text{total}}}{\partial p} \approx 0
\end{equation}
\textit{This mathematically guarantees that the ``smaller-is-better'' heuristic is economically void for long-context reasoning:} the footprint is bottlenecked by the unquantized KV-cache.

\textit{3. The Same Model, Two Opposite Outcomes.}
The most striking finding at 32B is that the \emph{same model} on the \emph{same GPU} exhibits opposite trap behaviors depending entirely on the workload:
\begin{itemize}
    \item \textit{GSM8K} (B=8$\to$16): The efficiency trap dissolves (COR$_{4\text{-bit}} = -0.46$). 4-bit is 87\% \emph{faster} than FP16 and uses 30\% less energy. 
    \item \textit{MathQA} (B=4$\to$8): The trap persists (COR$_{4\text{-bit}} = +0.73$). 4-bit is 42\% \emph{slower} than FP16 and consumes 2.3$\times$ more energy.
\end{itemize}

Since VRAM compression is identical in both cases 
(66$\to$22\,GB), the divergence is not driven by 
memory capacity. The mechanism is batch size: 
MathQA's structured prompts (3-shot CoT with 
five verbose answer options) produce longer input 
sequences, consuming more KV-cache memory per 
sample and constraining the maximum batch size 
to B=4 (FP16) and B=8 (4-bit), compared to 
B=8 and B=16 for GSM8K.

By Equation~\ref{eq:nst} in the following Theorem ~\ref{thm:dis}, 
halving the batch size doubles $N^*$, pushing 
the 32B model back below the dissolution threshold. 
This is a direct empirical confirmation that the 
Quantization Trap is not a fixed property of 
model scale, but a function of the deployment 
configuration size $N$, batch size $B$, 
hardware architecture $h$, and precision $p$.

\begin{theorem}[Critical Model Scale for Trap Dissolution]
\label{thm:dis}
Let $N$ denote the number of model parameters, $p$ and $\pi$ the quantized and native precisions, $\mathcal{B}_W$ the hardware memory bandwidth, $B$ the batch size, and $\phi(h,p)$ the per-forward-pass casting overhead. Define the Critical Model Scale as:
\begin{equation}
    N^* = \frac{\phi(h,p) \cdot \mathcal{B}_W \cdot B}{\alpha (\pi - p)}
    \label{eq:nst}
\end{equation}
where $\alpha$ is the memory access cost per bit. Then for all $N > N^*$, the Casting Overhead Ratio satisfies $\mathrm{COR} < 0$ (the efficiency trap dissolves), while for $N < N^*$, $\mathrm{COR} > 0$ (the trap is active).
\end{theorem}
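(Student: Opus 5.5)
The plan is to build an explicit per-forward-pass latency model for the memory-bound decoding regime and read off the sign of $\mathrm{COR}$ from it. In that regime, by the Lemma on average latency per hop and the COR Approximation Proposition, $\mathrm{COR} = \tau_p/\tau_{\text{ref}} - 1$. So it suffices to compare the per-hop latency at precision $p$ with the latency at native precision $\pi$.

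For $\tau_{\text{ref}}$, I would take the native pass to be weight-traffic dominated: $\tau_{\text{ref}} = \alpha N \pi/\mathcal{B}_W$ plus a shared term $\tau_0$ (kernel launch and KV reads) that is common to both precisions.

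For $\tau_p$, the quantized pass moves only $\alpha N p/\mathcal{B}_W$ of weight data, but it incurs the casting overhead $\phi(h,p)$. The key modelling step is how $\phi$ enters per sequence. Following the amortization picture of Theorem~\ref{thm:Bstar}, the casting work is a per-forward-pass cost. I would charge it against the per-sample bandwidth saving aggregated over the batch, so that the competing terms are $\phi(h,p)\,B$ and $\alpha N(\pi-p)/\mathcal{B}_W$. This is the same normalization that makes Eq.~\eqref{eq:nst} dimensionally consistent with $B^*$ of Theorem~\ref{thm:Bstar}. It is also consistent with the paper's 32B observation, where halving $B$ doubles $N^*$.

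The core computation is then
\[
\tau_p - \tau_{\text{ref}} \;=\; \phi(h,p)\,B \;-\; \frac{\alpha N(\pi-p)}{\mathcal{B}_W},
\]
with $\tau_0$ cancelling. Since $\tau_{\text{ref}}>0$, the sign of $\mathrm{COR}$ equals the sign of this difference. The difference is strictly decreasing and affine in $N$, because $\pi>p$, $\alpha>0$ and $\mathcal{B}_W>0$. Its unique root is exactly $N^* = \phi(h,p)\,\mathcal{B}_W B/(\alpha(\pi-p))$. This gives $\mathrm{COR}<0$ for $N>N^*$ and $\mathrm{COR}>0$ for $N<N^*$. I would also remark that at $N=N^*$ the two effects balance, and that $\partial N^*/\partial B>0$ links back to Theorem~\ref{thm:Bstar}: solving the same equality for $B$ at fixed $N$ recovers a threshold of the $B^*$ form.

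The main obstacle is justifying the latency model rather than the algebra, and in particular the claim that $\phi$ scales with $B$ while the bandwidth saving does not. A naive memory-bound model, in which weights are loaded once per pass and cast once per pass, gives a threshold independent of $B$, or even decreasing in $B$. I would therefore state this explicitly as a modelling assumption: casting work scales with the number of activations processed, and the weight-stream saving is fixed per pass. I would flag that the theorem is exact only under this assumed cost structure, and that the $\tau_0$ terms are equal across precisions.
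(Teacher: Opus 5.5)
Your argument is correct and is essentially the paper's proof. The paper uses per-query amortized latencies $\tau_{\mathrm{ref}} = \alpha N\pi/(\mathcal{B}_W B)$ and $\tau_q = \alpha N p/(\mathcal{B}_W B) + \phi(h,p)$, which is your per-pass model divided by $B$ with $\tau_0 = 0$; it thus silently relies on the cost structure you make explicit (casting charged per query, weight fetch amortized over the batch) and solves the same inequality for $N$.

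One caution on a side remark: since $N^* \propto B$, halving $B$ halves $N^*$ rather than doubling it, as your own $\partial N^*/\partial B > 0$ shows. Under this model, smaller batches therefore favour dissolution, so the formula contradicts the paper's 32B observation (MathQA, with the smaller batch, stays trapped) rather than supporting it; drop that appeal instead of citing it as evidence.
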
 

This Theorem (\textit{proof in Appendix)} makes three predictions, all confirmed by our data:
\begin{enumerate}
    \item \textit{$N \uparrow \implies$ trap dissolves:} $\mathrm{COR}$ drops from 0.41 (14B) to -0.46 (32B on GSM8K), locating $N^*$ strictly within this range.
    \item \textit{$B \uparrow \implies$ trap dissolves:} Equation~\ref{eq:nst} dictates $N^* \propto B$. Thus, the 32B model escapes the trap at larger batches (GSM8K, B=16) but falls back into it when context limits the batch size (MathQA, B=8). 
    \item \textit{$(\pi - p) \uparrow \implies$ trap dissolves:} 4-bit yields larger bandwidth savings than 8-bit, offsetting casting costs more effectively (on 32B GSM8K, COR$_{4\text{-bit}} = -0.46$ vs.\ COR$_{8\text{-bit}} = 0.05$).
\end{enumerate}

\subsubsection{Prediction for 70B+ Frontier Models}
Single-GPU FP16 inference at 70B requires $\sim$140\,GB VRAM, precluding controlled single-GPU experiments without introducing multi-GPU communication confounds. However, Theorem~\ref{thm:dis} yields a rigorous prediction:
\begin{corollary}[Residual Trap at Frontier Scale]
\label{cor:residual}
For models much larger than the critical scale 
($N \gg N^*$), Theorem~\ref{thm:dis} 
guarantees that the efficiency trap vanishes: 
quantized inference becomes faster and cheaper 
than FP16. However, by Theorem~\ref{thm:thm4} 
(Amortization-Trust Decoupling), reasoning 
accuracy is independent of throughput gains 
--- quantization noise still compounds across 
reasoning hops regardless of how efficiently the 
hardware executes each hop. 

Therefore, at frontier 
scale, the Quantization Trap does not disappear; 
it becomes \textbf{invisible}: the efficiency 
gains mask a latent accuracy deficit that only 
manifests in multi-hop reasoning.
\end{corollary}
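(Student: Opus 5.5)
The plan is to derive the corollary by composing Theorem~\ref{thm:dis} with Theorem~\ref{thm:thm4}, handling the efficiency and trust components of the Sustainability Vector separately. This separation is what the statement itself suggests.

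For the efficiency component, I will take $N \gg N^*$ with $N^*$ as in Eq.~\ref{eq:nst}. Theorem~\ref{thm:dis} then gives $\mathrm{COR} < 0$. Through the $COR$ Approximation this means $TPS_p > TPS_{\text{ref}}$, so $\eta > 1$. Since the weight memory scales as $\beta p$, we also have $\rho \ge 1$, and hence $E_{SI}$ is at its ceiling. By the latency lemma ($\tau = 1/TPS$) and the HLI energy model $\mathcal{E}_q = \mathcal{P}_{\text{TDP}} \mathcal{D}/N$, we get $\mathcal{E}_q(p) < \mathcal{E}_q(\pi)$, so $S_{SI} = 1$. This step is routine, but the argument should say plainly that it depends on the linear bandwidth model underlying Theorem~\ref{thm:dis} and on $\mathcal{P}_{\text{TDP}}$ being fixed across precisions.

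For the trust component, I will invoke part 2 of Theorem~\ref{thm:thm4}, which gives $\partial T/\partial B = 0$, and argue the analogous invariance in $N$ only insofar as the throughput mechanism is concerned. The point is that $T$ depends on the per-hop transition probabilities $P(h_k \mid h_{<k}, x, \theta)$ in the product formula. These probabilities depend on $\hat W_p = W_{16} + \xi_p$, and not on $\tau_{\text{cast}}$ or $\tau_{\text{comp}}$. I will write each quantized per-hop success as $P_{16}(h_k\mid\cdot)(1-\epsilon_k)$ with $\epsilon_k \ge 0$. This gives $T_{SI} = \prod_{k=1}^K (1-\epsilon_k)$, which is strictly below $1$ whenever some $\epsilon_k > 0$ and decreases in the depth $K$. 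The per-hop stratification in Section~\ref{sec:per-hop} supplies the empirical grounding for this.

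The main obstacle is exactly this step. The 14B--32B data show $T_{SI} > 1$ in some cases, which the stochastic-resonance discussion attributes to the noise $\xi_p$ acting as a regularizer. The "latent accuracy deficit" therefore cannot be stated unconditionally. I will first try to prove it in expectation over $\xi_p$ under the assumption that the FP16 trajectory is locally optimal per hop, so that $\mathbb{E}[\epsilon_k] \ge 0$. I will also state explicitly that "invisible" means the following: efficiency-only metrics ($E_{SI}$, $S_{SI}$, $\mathrm{COR}$) no longer detect the trap, while $w_T(1-T_{SI})$ remains the only nonzero contribution to the SI deficit. That reduces the corollary to a sign condition on $\mathbb{E}[T_{SI}] - 1$, which is the claim I would make precise.
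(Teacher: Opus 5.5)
Your two-part decomposition is the paper's own argument. The corollary carries its justification inline, composing Theorem~\ref{thm:dis} for efficiency with Theorem~\ref{thm:thm4} for trust. Your efficiency step ($\mathrm{COR}<0\Rightarrow\eta>1$, $\rho\ge 1$, $E_{SI}$ and $S_{SI}$ at their caps under a fixed $\mathcal{P}_{\text{TDP}}$) is a more careful version of what the paper simply asserts.

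The gap is the deficit step, and your route does not close it. Theorem~\ref{thm:thm4} only gives $T(p,B)=T(p)$. It says nothing about the sign of $T(p)-T(\pi)$, so the deficit is an extra input that must come from elsewhere.

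Your expectation argument does not supply it. Every $\epsilon_k$ is a function of the same fixed perturbation $\xi_p$, so the factors are strongly correlated, and $\mathbb{E}[\epsilon_k]\ge 0$ does not give $\mathbb{E}\bigl[\prod_k(1-\epsilon_k)\bigr]\le 1$. With two hops and $\epsilon_1=\epsilon_2=Z$ you get $\mathbb{E}[(1-Z)^2]=1-2\,\mathbb{E}[Z]+\mathbb{E}[Z^2]$. This exceeds $1$ whenever $\mathbb{E}[Z^2]>2\,\mathbb{E}[Z]$, and a larger $\sigma_p^2$ (the 4-bit case) pushes exactly that way.

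What would rescue the argument is $\epsilon_k\ge 0$ pointwise; independence across hops is unavailable because $\xi_p$ is shared. Pointwise nonnegativity means $W_{16}$ locally maximizes every correct per-hop transition probability at once. That is the conclusion restated hop by hop, and it is generically false. Training makes only an averaged loss gradient vanish, so first-order terms $\langle g_k,\xi_p\rangle$ survive, where $g_k$ is the gradient of the $k$-th correct transition at $W_{16}$. Their sign is fixed by the realized $\xi_p$.

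This is the stochastic-resonance mechanism the paper itself invokes for $N\ge 14$B, and the data agree with it:
\begin{itemize}
\item In the only configuration where the trap dissolves through scale (32B, GSM8K, 4-bit, $\mathrm{COR}=-0.46$), 4-bit accuracy exceeds FP16 (72.9\% vs.\ 71.0\%, Table~\ref{tab:frontier}).
\item At 72B, where by the decoupling itself the hardware configuration is irrelevant to accuracy, Section~\ref{sec:72b} reports the accuracy trap dissolving.
\end{itemize}
Finally, a bound on $\mathbb{E}_{\xi_p}[T_{SI}]$ concerns a hypothetical ensemble of quantizations, not the single deployed model whose $T_{SI}$ is measured. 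It would therefore prove a strictly weaker statement.

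The paper does not derive this step either. It asserts that quantization noise ``still compounds across reasoning hops''; the same assertion is the Trust bullet in the appendix proof of Theorem~\ref{thm:reasoning_divergence}. It then extrapolates the 7B stratification of Table~\ref{tab:per-hop} to all scales.

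The defensible version is conditional: take $T_{SI}<1$ for the frontier model as an explicit hypothesis. Then, for $N\gg N^*$ on a memory-bound single GPU:
\begin{itemize}
\item Your efficiency argument gives $E_{SI}=S_{SI}=1$. This uses the cap $E_{SI}\le 1$, which the paper applies in practice but does not write into the definition of $E_{SI}$; the AWQ row of Table~\ref{tab:cross-backend} yields $SI=0.981$ only with $E_{SI}=S_{SI}=1$.
\item The semantic-independence assumption behind Theorem~\ref{thm:thm4} shows the hypothesis is unaffected by the throughput gain.
\item The entire SI deficit is $w_T(1-T_{SI})$, undetectable by $\mathrm{COR}$, $E_{SI}$ or $S_{SI}$.
\end{itemize}
Your closing definition of ``invisible'' is exactly right for this version. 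Replace the attempted derivation of the deficit from the product formula with the hypothesis itself.
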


This resolves an apparent contradiction in 
practitioner experience. At 70B+ scale, 4-bit 
models are genuinely faster, cheaper, and show 
negligible accuracy loss on single-turn benchmarks leading to the belief that quantization is 
``free.'' Our framework reveals that this belief 
is correct for efficiency ($E_{SI}$, $S_{SI}$) 
but wrong for multi-hop trust ($T_{SI}$): the 
per-hop noise compounding identified in 
Table~\ref{tab:per-hop} (6.9\%$\to$11.6\% 
relative loss) persists at any scale, and only 
becomes detectable when the reasoning chain is 
sufficiently long. The SI framework is designed 
precisely to expose this hidden cost.

The full COR trajectory from 0.6B to 72B, 
including the multi-GPU reversal, is synthesized 
in Figure~\ref{fig:nstar} and discussed in 
Section~\ref{sec:72b}.


\subsubsection{Unified Trap Taxonomy}
Synthesizing all experimental evidence from 0.6B to 32B, we identify three regimes of the Quantization Trap, predicted by the relationship between $N$ and $N^*(B)$:

\begin{enumerate}
    \item \textit{Full Trap} ($N \ll N^*$; 0.6B--7B): Accuracy degrades \emph{and} efficiency worsens. COR $>$ 1.0, energy spikes 2--4$\times$, and deductive trust decays. All three SI pillars favor FP16. 
    
    \item \textit{Efficiency-Only Trap} ($N \lesssim N^*$; 14B): Redundancy (Stochastic Regularization) absorbs quantization noise, preserving accuracy. However, unamortized casting overhead forces a latency-energy inversion. The penalty manifests purely as operational inefficiency.
    
    \item \textit{Conditional Dissolution} ($N \gtrsim N^*$; 32B): The trap dissolves \emph{only} if workloads permit large batch sizes (GSM8K, B=16). If sequence length constrains the batch size (MathQA, B=8), the trap re-emerges. 

\item \textit{Memory-Only Victory} ($N \gg N^*$; 72B): See Section~\ref{sec:72b} for controlled 
FP16 comparison on 3$\times$H100.


\end{enumerate}

\textit{No free quantization lunch:} In no regime does quantization deliver the \textit{unconditional equivalent quality at lower cost promised by linear scaling heuristics.} At small scales, it fails broadly; at large scales, it fails conditionally. The Quantization Trap is not a hardware glitch: it is a fundamental, conditional phase transition on the multi-hopping reasoning manifold.

\subsection{72B Frontier Validation: When Bigger Models Deepen the Trap}
\label{sec:72b}

We established that the Quantization Trap dissolves on a single GPU when model size exceeds $N^*$ (Theorem~\ref{thm:dis}). One might assume that at the 70B+ frontier scale, the trap vanishes entirely. Testing Qwen2.5-72B on both GSM8K and MathQA reasoning across three hardware configurations  reveals the exact opposite. Table~\ref{tab:72b} summarizes the results 
across two multi-GPU configurations.

\begin{table*}[t]
\centering
\small
\caption{Qwen2.5-72B across GSM8K and MathQA. 
On 3$\times$H100, accuracy is near-lossless on 
both benchmarks yet COR$=3.00$ on GSM8K: the 
\emph{highest} in our 120$\times$ study. On 
MathQA, 4-bit on a \emph{single} H100 achieves 
higher accuracy than FP16 on \emph{three} H100s, 
yet consumes 2.5$\times$ more energy (NVML).}
\label{tab:72b}
\begin{tabular*}{\textwidth}{@{\extracolsep{\fill}}
  llccccc@{}}
\toprule
\textbf{Benchmark (Hardware)} 
  & \textbf{Mode} 
  & \textbf{Acc (\%)} 
  & \textbf{TPS} 
  & \textbf{E/Q (J)} 
  & \textbf{COR} 
  & \textbf{VRAM (GB)} \\
\midrule
\multicolumn{7}{l}{\textit{GSM8K --- 
  3$\times$H100-80GB (B=8, TDP=2100W)}} \\
& 16-bit & \textbf{91.89} & \textbf{76.79} 
  & \textbf{11{,}804} & 0.00 & 151.7 \\
& 4-bit  & 91.66 & 19.19 
  & 47{,}592 & 3.00 & \textbf{63.2} \\
\midrule
\multicolumn{7}{l}{\textit{MathQA --- 
  3$\times$H100-80GB (FP16) vs.\ 
  1$\times$H100-80GB (4-bit)}} \\
& 16-bit (3$\times$) & 78.99 & \textbf{73.36} 
  & \textbf{5{,}750}$^\dagger$ & 0.00 & 153.3 \\
& 4-bit (1$\times$)  & \textbf{79.50} & 18.53 
  & 14{,}521$^\dagger$ & --- & \textbf{59.7} \\
\midrule
\multicolumn{7}{l}{\textit{GSM8K --- 
  2$\times$RTX 6000 Ada (B=4, TDP=1200W)}} \\
& 8-bit  & 76.72 & 8.59 
  & 20{,}943 & --- & 77.9 \\
& 4-bit  & \textbf{78.47} & \textbf{9.43} 
  & \textbf{19{,}088} & --- & \textbf{44.3} \\
\bottomrule
\multicolumn{7}{l}{\small $^\dagger$NVML 
  energy. All other E/Q values are TDP-based.}
\end{tabular*}
\end{table*}

\textit{1. The Accuracy Trap Dissolves.}
At 72B, accuracy is not merely preserved; it can improve. On GSM8K (3×H100), 4-bit trails FP16 by only 0.23pp (91.66\% vs 91.89\%)—the smallest gap in our 120× campaign. On MathQA, 4-bit on a single H100 actually exceeds the FP16 baseline run on three H100s (79.50\% vs 78.99\%, +0.51pp). On 2$\times$RTX 6000 Ada, 4-bit even outperforms 8-bit (78.47\% vs.\ 76.72\%). The stochastic regularization effect, first observed at 14B, is now confirmed at frontier scale across two benchmarks and three hardware configurations. By every accuracy metric, 4-bit quantization at 72B is "lossless." A practitioner evaluating accuracy alone would deploy with complete confidence. This is precisely what makes the following findings dangerous.

\textit{2. The bigger model, the more severe 'Trap Explodes'.}
Here lies the paradox. Despite near-perfect accuracy, the 3$\times$H100 setup reveals \emph{the most severe efficiency trap in our study}: COR$=3.00$. The 4-bit model is 4$\times$ slower than FP16 (19.19 vs.\ 76.79 TPS) and consumes 4$\times$ more energy (47{,}592 vs.\ 11{,}804\,J per query). 

Strikingly, this COR$=3.00$ at 72B exceeds the COR$=1.80$ at 0.6B. \emph{Our largest model is more deeply trapped than our smallest}, shattering the assumption that scaling inherently resolves the Quantization Trap.

\textit{MathQA speaks louder:} The MathQA results shatter the "linear scaling" assumption: less hardware does not equal less energy: it goes the opposite.  When pitting 4-bit quantization on a single H100 (700W) against full-precision FP16 across three H100s (2,100W), the quantized model actually consumes more energy per query. Despite using one-third the hardware, the 4-bit setup incurs a massive 2.5$\times$ energy penalty in actual usage (14,521 J vs. 5,750 J) and a 1.3$\times$ penalty under theoretical max limits (TDP). 

As proven in Theorem~\ref{thm:Bstar} (\textit{Sequential Amortization Failure}), the unamortized "casting overhead" of dequantizing weights at every reasoning step can completely eclipse all memory bandwidth savings.

The takeaway is stark: quantization actively wastes more energy than full precision, even when drastically reducing your GPU footprint while preserving accuracy.

\textit{3. Why Scale Deepens the Trap on Multi-GPU?}
The dramatic reversal from COR$=-0.46$ (32B, single-GPU) to COR$=+3.00$ (72B, multi-GPU) stems from hardware saturation dynamics:

\begin{enumerate}
    \item \textit{Single-GPU (32B):} The FP16 model consumes 66.2\,GB of a 96\,GB GPU. Inference is deeply \emph{memory-bandwidth-bound}, so quantization's 3$\times$ weight compression yields massive bandwidth savings that offset casting overheads. Result: COR$<0$.
    
    \item \textit{Multi-GPU (72B):} With 240\,GB of aggregate VRAM (3$\times$80\,GB), the 151.7\,GB FP16 model fits with 88\,GB to spare. FP16 is no longer memory-bound, instantly evaporating quantization's bandwidth advantage. Consequently, casting overheads dominate, compounded by inter-GPU synchronization delays.
\end{enumerate}

This aligns with Theorem~\ref{thm:dis}. The threshold formula $N^* = \phi \cdot \mathcal{B}_W \cdot B / \alpha(\pi - p)$ implicitly assumes a \emph{fixed} hardware bandwidth $\mathcal{B}_W$. Multi-GPU setups multiply effective bandwidth, raising $N^*$ proportionally. A 72B model that exceeds $N^*$ on a hypothetical 200\,GB single GPU falls \emph{below} $N^*$ on a 3$\times$80\,GB cluster where FP16 runs unconstrained.

\textit{4. The Memory-Only Victory.}
As the quantization scorecard (Table~\ref{tab:72b-scorecard}) reveals, at 72B quantization delivers on exactly two of its four promises: accuracy is preserved (even improving slightly on MathQA) and memory is reduced by roughly 60\%. 

The 58--61\% VRAM reduction provides a massive economic benefit. As explicitly demonstrated by the MathQA configuration, it enables 72B deployment on a single 80\,GB GPU instead of three. However, this hardware consolidation comes at a severe operational cost: throughput universally degrades by 4$\times$ across both benchmarks, and energy consumption surges by 2.5$\times$ to 4.0$\times$ per query, even when using fewer GPUs. 

In SI terms, the divergence is extreme: $T_{SI} \approx 1.00$, but $E_{SI} \approx 0.25$ and $S_{SI}$ drops to between 0.25 and 0.40. Accuracy-only metrics would declare this ``lossless,'' while throughput-only metrics would deem it ``catastrophic.'' The SI framework uniquely captures both realities. This reflects the true dilemma for practitioners serving frontier models: they choose 4-bit not because it performs better, but because they \emph{cannot afford} the FP16 hardware. At the frontier scale, the Quantization Trap is an economic necessity.

\begin{table*}[!htpb]
\centering
\small
\caption{The 72B quantization scorecard. On both 
benchmarks, quantization preserves accuracy and 
reduces memory but severely degrades throughput 
and energy even when using fewer GPUs.}
\label{tab:72b-scorecard}
\begin{tabular*}{\textwidth}{@{\extracolsep{\fill}}lcccc@{}}
\toprule
\textbf{Promise} & \textbf{Delivered?} 
  & \textbf{FP16} & \textbf{4-bit} 
  & \textbf{Penalty} \\
\midrule
\multicolumn{5}{l}{\textit{GSM8K 
  (3$\times$H100 vs.\ 3$\times$H100)}} \\
Accuracy\ preserved 
  & \checkmark & 91.89\% & 91.66\% 
  & $-$0.23\,pp \\
Faster inference 
  & $\times$ & 76.79 TPS & 19.19 TPS 
  & 4.0$\times$ slower \\
Lower energy 
  & $\times$ & 11{,}804\,J & 47{,}592\,J 
  & 4.0$\times$ more \\
Less memory 
  & \checkmark & 151.7\,GB & 63.2\,GB 
  & 58\% saved \\
\midrule
\multicolumn{5}{l}{\textit{MathQA 
  (3$\times$H100 FP16 vs.\ 
  1$\times$H100 4-bit)}} \\
Accuracy\ preserved 
  & \checkmark & 78.99\% & \textbf{79.50\%} 
  & $+$0.51\,pp \\
Faster inference 
  & $\times$ & 73.36 TPS & 18.53 TPS 
  & 4.0$\times$ slower \\
Lower energy$^\dagger$ 
  & $\times$ & 5{,}750\,J & 14{,}521\,J 
  & 2.5$\times$ more \\
Less memory 
  & \checkmark & 153.3\,GB & 59.7\,GB 
  & 61\% saved \\
\bottomrule
\multicolumn{5}{l}{\small $^\dagger$NVML 
  energy. GSM8K values are TDP-based.}
\end{tabular*}
\end{table*}

\begin{figure}[!htpb]
\centering
\includegraphics[width=\columnwidth]{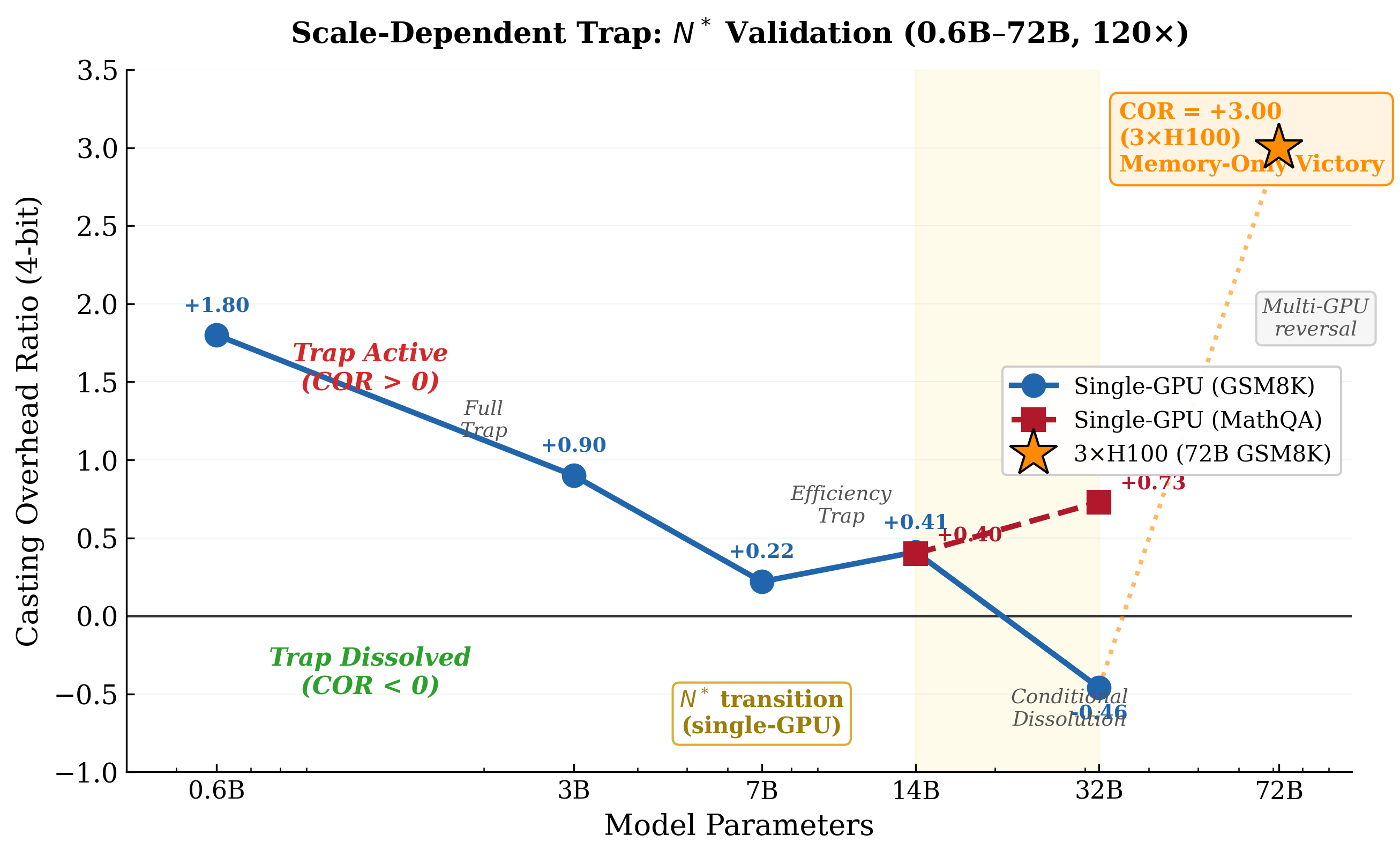}
\caption{COR trajectory from 0.6B to 72B (120$\times$). 
The single-GPU line (blue) crosses zero between 14B 
and 32B, locating $N^*$. MathQA (red) remains trapped 
at 32B due to smaller batches. The 72B point (orange 
star, 3$\times$H100) reveals a dramatic reversal: 
COR$=+3.00$ because FP16 fits with headroom, 
removing the bandwidth bottleneck. Trap dissolution 
is a property of hardware saturation, not model 
scale alone (Theorem~\ref{thm:dis})}.
\label{fig:nstar}
\end{figure}

 \textit{5. The Complete 120$\times$ Trajectory.} Figure~\ref{fig:nstar} maps COR from 0.6B to 72B. On a single GPU, GSM8K (blue) drops from +1.80 at 0.6B to -0.46 at 32B, crossing zero to establish the $N^*$ threshold. MathQA (red) stays positive at 32B (+0.73), proving batch size alters this boundary. Crucially, scaling to 72B on 3$\times$H100 (orange star) triggers a massive reversal to COR = +3.00. With ample multi-GPU VRAM for FP16, quantization's bandwidth advantage evaporates and casting overhead dictates performance. The yellow transition zone confirms $N^*$ is highly hardware-dependent, not universal.

 The four regimes: Full Trap (0.6B--7B), 
Efficiency-Only (14B), Conditional Dissolution 
(32B), and Memory-Only Victory (72B) are 
annotated in Figure~\ref{fig:nstar} and 
constitute the complete trap taxonomy validated 
across 120$\times$ of model scale, and confirmed on both GSM8K and MathQA.

This non-monotonic trajectory carries a sobering implication: \emph{the industry trend of serving larger models on multi-GPU clusters does not escape the Quantization Trap, it re-enters it from a different direction.} While single-GPU deployments of 32B models hit a sweet spot where quantization genuinely improves efficiency, scaling to frontier models on multi-GPU clusters provides enough hardware abundance to run FP16 efficiently. This eliminates quantization's bandwidth advantage and reinstates the full casting overhead penalty: completing the trap 
cycle.

\section{Discussion and Conclusion}

In this work, we introduced the Sustainability Index (SI) framework to demonstrate that the linear ``smaller-is-more-efficient'' scaling heuristic breaks down in multi-hop reasoning. While prior literature highlights quantization's energy benefits for single-turn tasks and standard inference workloads \cite{Ra2024,Hu2025}, we show that sequential logical chains fundamentally alter these dynamics. The compounding accumulation of hardware-level casting overhead and quantization noise across successive reasoning steps drives the system into a \emph{Quantization Trap}. Consequently, while aggressive quantization optimizes energy for shallow tasks, it paradoxically increases net energy consumption and degrades deductive accuracy in complex reasoning environments. We plan to address this gap in future work by framing the allocation of numerical precision as a game-theoretic equilibrium between hardware energy constraints and deductive trust.

\textit{Next-Generation Hardware and Deductive Fragility.} 
Emerging architectures (e.g., NVIDIA's Blackwell \cite{nvidia2024b}) and optimized kernels (e.g., AWQ) offer a path to neutralize the \emph{efficiency} component of the trap. By providing native support for low-bit formats and fused operations, they drive casting overhead toward zero ($\phi \to 0$). Our cross-backend experiments (Section~\ref{sec:cross-backend}) validate this: AWQ achieves a casting overhead ratio (COR) of $-0.03$ on an A100, fully eliminating the energy and throughput deficits. 

However, hardware acceleration cannot resolve the \emph{accuracy} trap. Quantization noise compounds across reasoning hops, degrading deductive trust ($T_{SI}$) in a manner that cannot be amortized by batching or kernel speedups. Table~\ref{tab:per-hop} demonstrates this multi-hop fragility: the relative accuracy loss nearly doubles from 6.9\% at 1--2 steps to 11.6\% at 3--4 steps. Because reasoning remains structurally sequential ($B \approx 1$), scaling-law divergence persists at low bit-widths (Figure~\ref{fig:fig4}C) regardless of hardware efficiency. 

\textit{Scale, Hardware, and the 72B Reversal.} 
Our analysis further reveals that the Quantization Trap is non-monotonic with model scale. At 32B, a single GPU faces severe memory pressure; here, quantization's bandwidth compression genuinely improves efficiency (COR = $-0.46$ on GSM8K). Yet this dissolution is highly conditional: the same 32B model re-enters the trap on MathQA (COR = $+0.73$) when longer sequences force smaller batch sizes. 

Most strikingly, our frontier experiments (Section~\ref{sec:72b}) demonstrate that multi-GPU clusters \emph{deepen} the trap. Serving Qwen2.5-72B on 3$\times$H100 provides near-lossless accuracy ($-0.23$\,pp), but yields a COR of $+3.00$—the highest in our entire $120\times$ study. When aggregate multi-GPU VRAM provides sufficient headroom to run FP16 without memory bottlenecks, quantization's bandwidth advantage instantly evaporates. Casting and communication overheads reassert dominance. Thus, the industry trend of serving larger models on multi-GPU infrastructure does not escape the Quantization Trap; it merely re-enters it from a hardware-abundant direction.

Ultimately, even as hardware evolves and models scale, the simple linear heuristic ($E \propto \text{bits}$) is obsolete. Precision-efficiency trade-offs remain rigorously task-, scale-, and hardware-conditional.

\textit{The Fundamental Lesson.} Across a 120$\times$ model scale (0.6B--72B), six GPU architectures, and three backends, one invariant emerges: \emph{there is no free quantization lunch for multi-hop reasoning.} Quantization invariably exacts a toll—sacrificing accuracy at small scales, efficiency at mid scales, and energy at the multi-GPU frontier. The penalty changes shape, but never vanishes. Ultimately, the Quantization Trap is not a transient bug to be patched by faster kernels; it is a structural property of sequential deduction. It can only be exposed through multidimensional evaluation and navigated by practitioners who recognize their specific hardware-scale regime.

\textit{Limitations.} 
Our analysis focuses on \emph{weight-only} quantization (W4A16, W8A16); weight-activation quantization targets a distinct, compute-bound regime requiring separate investigation. Our empirical range spans a 120$\times$ scale (0.6B--72B) across six GPU architectures (L4, A100, H100, RTX PRO 6000 Blackwell, RTX 6000 Ada) with cross-backend validation. In our 72B multi-GPU results, inter-GPU communication acts as an additional confound; isolating this latency from pure casting overhead is an important direction for future work. Finally, while we use \texttt{bitsandbytes} as the most widely deployed baseline, our SI framework is method-agnostic and can diagnose any quantization backend.

\textbf{Data and code:} \textit{Public benchmarks used; software available upon acceptance}

\section*{Impact Statement}
This research exposes the ”Quantization Trap” and characterizes a fundamental phenomenon of \textit{scaling law breaking} in multi-hop reasoning. We provide a rigorous framework to prevent ”Sustainability Inversion,” where aggressive model compression paradoxically increases energy consumption and carbon emissions while degrading deductive
reliability. By mathematically demonstrating where and
why linear scaling laws fail, our findings challenge the
industry-standard heuristic that smaller models are inherently more efficient. This work encourages a transition
toward precision-aware scaling and hardware, software co-design, ensuring that AI advancement remains environmentally responsible and trustworthy in safety-critical domains
such as high-stakes medical and legal reasoning where logical fidelity must not be sacrificed for misleading gains in
memory footprint.

\textit{Quantization Trap is a sequential reasoning trap:} The Quantization Trap is not confined to arithmetic; it infects any task requiring fragile, multi-step logic. In clinical medicine, a single quantization-induced error early in a diagnostic chain can cascade, producing a highly confident treatment plan built on corrupted premises. Legal interpretation, complex code generation, and autonomous agent workflows face the exact same vulnerability. Crucially, Our data shows error rates compound exponentially after four logical steps, failing in the exact high-stakes scenarios where AI is most critical. The Quantization Trap is fundamentally a \textit{sequential reasoning trap} inherited by any model that thinks in chains.

\appendix
\onecolumn
\section{Appendix}

\begin{proposition}[Scaling Law Divergence]
Let $SI(\theta)$ be a global utility function of precision $p$. A \textit{Quantization Trap} is identified when: 
\begin{equation}
    \frac{\partial SI}{\partial p} > 0 
\end{equation}
signifying a fundamental breakdown of the linear scaling law.
\label{prop:slb}
\end{proposition}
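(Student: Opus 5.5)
The statement is essentially definitional, so the proof should make precise what ``linear scaling law'' asserts about $SI$ and show that a positive derivative contradicts it. I will use the compensatory aggregation $SI(\theta)=w_T T_{SI}(\theta)+w_E E_{SI}(\theta)+w_S S_{SI}(\theta)$ with $w_i>0$ and $\sum_i w_i=1$, and treat $p$ as a continuous parameter. In the discrete experiments, $\partial/\partial p$ is read as the finite-difference slope between adjacent bit-widths.

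\textbf{Step 1: formalize the hypothesis.} I will encode the linear scaling law as follows. Energy satisfies $\mathcal{E}_q\propto p$, so $\partial \mathcal{E}_q/\partial p>0$. Throughput and residency improve as $p$ decreases, so $\partial\eta/\partial p\le 0$ and $\partial\rho/\partial p<0$. Trust is taken to be near-lossless, $\partial T_{SI}/\partial p\approx 0$.

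\textbf{Step 2: derive the sign of each pillar under the hypothesis.}
\begin{itemize}
\item Energy: $S_{SI}=\mathcal{E}_{\mathrm{ref}}/\mathcal{E}_q$ in the same-grid, linear form, so $\partial S_{SI}/\partial p=-\mathcal{E}_{\mathrm{ref}}\,\mathcal{E}_q^{-2}\,\partial\mathcal{E}_q/\partial p<0$. The $\min(1,\cdot)$ cap can only flatten this to zero, never make it positive.
\item Economics: $E_{SI}=(\alpha/\eta+(1-\alpha)/\rho)^{-1}$. Differentiating gives
\[
\frac{\partial E_{SI}}{\partial p}=E_{SI}^2\left(\frac{\alpha}{\eta^2}\frac{\partial\eta}{\partial p}+\frac{1-\alpha}{\rho^2}\frac{\partial\rho}{\partial p}\right)<0 .
\]
\item Trust: under the hypothesis, $T_{SI}$ contributes no positive term.
\end{itemize}
By linearity of $SI$ in the pillars, $\partial SI/\partial p=\sum_i w_i\,\partial v_i/\partial p<0$. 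This is exactly the Monotonicity Hypothesis.

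\textbf{Step 3: conclude by contraposition.} If one observes $\partial SI/\partial p>0$, then at least one pillar derivative must carry the wrong sign. Either $\partial T_{SI}/\partial p>0$ and large, $\partial S_{SI}/\partial p>0$ (energy increasing as bits drop), or $\partial E_{SI}/\partial p>0$ (throughput collapse). Each of these violates one of the premises in Step 1, so the linear scaling law fails at $\theta$. That is the claimed Quantization Trap.

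I will also record the equivalent threshold form of the observed condition:
\[
w_T\,\frac{\partial T_{SI}}{\partial p}+w_S\,\frac{\partial S_{SI}}{\partial p}>-\,w_E\,\frac{\partial E_{SI}}{\partial p}.
\]
This matches the paper's informal description of the trust and energy penalties outweighing the economic gain. Via the Lemma $\tau=1/TPS$ and the COR Approximation, $\partial\eta/\partial p>0$ corresponds to $COR>0$.

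\textbf{Main obstacle.} The difficulty is not computational but definitional. ``Linear scaling law'' must be pinned down tightly enough that its negation is implied by $\partial SI/\partial p>0$. Two edge cases need care.
\begin{itemize}
\item The $\min(1,\cdot)$ saturation in $S_{SI}$ makes the map non-differentiable at the anchor. I will handle this with one-sided derivatives, or by working in the uncapped region.
\item $p$ is discrete. I will state the result for the finite-difference slope $SI(p_2)-SI(p_1)>0$ with $p_2>p_1$.
\end{itemize}
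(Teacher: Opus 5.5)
Your argument is correct, but it takes a different route from the paper. The paper's proof is purely definitional. It identifies the ``standard scaling regime'' with the Monotonicity Hypothesis $\partial SI/\partial p<0$ itself, writes $SI=\sum_i w_i v_i$ with $w_i>0$, and notes that a positive weighted sum of pillar gradients flips that sign. It then adds that the system is strictly dominated by its high-precision counterpart.

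You instead \emph{derive} the Monotonicity Hypothesis. You encode the linear scaling law as sign conditions on the primitive quantities $\mathcal{E}_q$, $\eta$, $\rho$ and accuracy, push them through the explicit formulas for $S_{SI}$ and $E_{SI}$, and then contrapose. This buys two things the paper's proof lacks. First, it checks that the pillar definitions are consistent with the scaling-law intuition. Second, it localizes the failure: $w_i>0$ forces at least one pillar slope to be positive, and you can name which primitive premise broke. It also shows that $\partial SI/\partial p>0$ alone gives only \emph{some} positive pillar slope. The strict dominance asserted at the end of the paper's proof needs all three slopes positive, which the paper argues separately for $B<B^*$ in its multi-hop divergence theorem.

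One premise needs tightening. For the contraposition you only need the premises to imply $\partial SI/\partial p\le 0$. Strictness, the value of $\alpha$, and the cap are therefore harmless. The trust premise $\partial T_{SI}/\partial p\approx 0$, however, does not deliver $\partial SI/\partial p\le 0$, and the energy pillar offers no slack to absorb a positive trust slope. Take the anchor at full precision $\pi$ with the $\min(1,\cdot)$ cap in place. Then your energy premise gives $\mathcal{E}_{\mathrm{ref}}/\mathcal{E}_q>1$ for every $p<\pi$, so $S_{SI}\equiv 1$ there and its slope is exactly $0$. Only $w_E\,\partial E_{SI}/\partial p$ is negative. Nothing in ``$\approx 0$'' stops $w_T\,\partial T_{SI}/\partial p$ from exceeding $w_E\,|\partial E_{SI}/\partial p|$ while every efficiency and energy premise still holds.

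Fix this by stating the idealized premise exactly, $\partial T_{SI}/\partial p\le 0$. Then every pillar slope is nonpositive, $\partial SI/\partial p\le 0$, and your Step~3 case split (trust, energy, or economics carries the wrong sign) is exactly the negation. A quantitative premise such as $w_T\,\partial T_{SI}/\partial p<-w_E\,\partial E_{SI}/\partial p$ also works. Note, though, that it is essentially your threshold inequality negated, so it moves you back toward the paper's purely definitional formulation.
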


\begin{proof}
By definition, $SI(p) = \sum_{i \in \{T,E,S\}} w_i v_i(p)$ where $w_i > 0$. In a standard scaling regime, utility increases as precision decreases ($\frac{\partial SI}{\partial p} < 0$). If the constituent sustainability pillars $v_i$ (Trust, Efficiency, Energy) instead exhibit positive gradients such that their weighted sum is positive, the system enters a state of divergence. In this state, the system is strictly dominated by its high-precision counterpart, rendering bit-reduction counterproductive.
\end{proof}

\begin{lemma}[Average Latency per hop]
Let $TPS$ be the tokens per second. The average latency per hop is $\tau = 1/TPS$.
\end{lemma}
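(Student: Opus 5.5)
The plan is to treat this lemma as a definitional identity, in which throughput is converted into per-token time. First I fix the setup. Over a measured decoding window of duration $\mathcal{D}$, the model emits $K$ tokens. Each token is one atomic hop, as in the paragraph on Logical vs.\ Atomic Hops. Throughput is then defined as $TPS = K/\mathcal{D}$, where $\mathcal{D}$ is the total execution latency used in the energy pillar.

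Next I use the Temporal Accumulation decomposition stated earlier:
\[
\mathcal{D} \approx \sum_{k=1}^{K} \tau_k, \qquad \tau_k = \tau_{\text{comp},k} + \tau_{\text{cast},k}.
\]
Here the hops are executed strictly sequentially, since autoregressive dependence means hop $k+1$ cannot start before hop $k$ finishes. So the per-hop latencies add up with no overlap. Define the average latency per hop as the arithmetic mean
\[
\tau = \frac{1}{K}\sum_{k=1}^{K}\tau_k .
\]
Substituting gives $\tau = \mathcal{D}/K = 1/TPS$, which is the claim.

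The only step needing care is the no-overlap assumption. If consecutive hops overlapped (pipelining), or if $B>1$ sequences were decoded concurrently, then wall-clock time would no longer equal the sum of per-hop times. In that case the identity would hold only for per-token time averaged over the batch. I will state explicitly that the lemma is taken in the $B \approx 1$ sequential regime adopted throughout the paper; under batching, $TPS$ is read per sequence.

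Beyond that assumption, I expect no obstacle. The result is the exact mean-value identity above, with the "$\approx$" inherited only from ignoring fixed prefill and setup time. I will remark that this overhead becomes negligible as $K$ grows, because it is amortized over $K \gg M$ hops.
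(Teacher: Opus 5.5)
Your proposal is correct and follows the paper's proof: the paper sets $TPS = K/T$ for $K$ atomic hops in a query of duration $T$, defines the average per-hop latency as $\tau = T/K$, and inverts. The only difference is that you define $\tau$ as the mean of individual hop latencies and reach $\mathcal{D}/K$ through the sequential sum $\mathcal{D}\approx\sum_k \tau_k$, which makes explicit the $B\approx 1$ no-overlap assumption that the paper absorbs into its definition. Note also that when the paper later applies $\tau = 1/TPS$ under batching (proof of the Critical Model Scale theorem), it reads $TPS$ as aggregate throughput and $\tau$ as amortized per-query latency; this matches your first reading, not the per-sequence convention you propose.
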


\begin{proof}
Let $K$ be the total number of atomic hops (equivalent to the total tokens generated) produced during a reasoning query of total duration $T$. By definition, the throughput $TPS$ is the rate of tokens produced per second:
$TPS = \frac{K}{T}$
    
The average latency per hop $\tau$ is the temporal cost incurred per atomic forward pass: $\tau = \frac{T}{K}.$ It follows by substitution that:
$\tau = \frac{1}{K/T} = \frac{1}{TPS}$

This identity establishes that the per-hop temporal cost is the mathematical reciprocal of the system's token production rate.
\end{proof}

\begin{proposition}[$COR$ Approximation]
Given a high-precision reference model (ref) and a quantized model ($p$), the Casting Overhead Ratio is given by:
\begin{equation}
    COR \approx \frac{TPS_{\text{ref}}}{TPS_p} - 1
\end{equation}
\end{proposition}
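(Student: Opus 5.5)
The argument rests on the Average Latency per hop lemma together with the additive latency decomposition from the Temporal Accumulation paragraph. We model the high-precision reference as incurring only native compute, so its per-hop latency is $\tau_{\text{ref}} = \tau_{\text{comp}}$. The quantized model is modeled as incurring the same compute plus the de-quantization cost, so $\tau_p = \tau_{\text{comp}} + \tau_{\text{cast}}$.

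The key modeling assumption is that the native compute work per atomic hop is the same for both precisions. After casting, the weights are back in native precision, so the matrix arithmetic is identical. This is exactly the premise behind the phrase "de-quantizing weights back to native precision" in the Casting Overhead discussion. It is also the reason the result is only an approximation.

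Applying the lemma to each model gives $\tau_{\text{ref}} = 1/TPS_{\text{ref}}$ and $\tau_p = 1/TPS_p$. Substituting into the definition of $COR$:
\begin{equation*}
COR = \frac{\tau_{\text{cast}}}{\tau_{\text{comp}}} = \frac{\tau_p - \tau_{\text{ref}}}{\tau_{\text{ref}}} = \frac{\tau_p}{\tau_{\text{ref}}} - 1 = \frac{1/TPS_p}{1/TPS_{\text{ref}}} - 1 = \frac{TPS_{\text{ref}}}{TPS_p} - 1 .
\end{equation*}
This algebra is routine once the decomposition is in place.

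The main obstacle is justifying the approximation sign, that is, the identification $\tau_{\text{comp}}^{(p)} \approx \tau_{\text{ref}}$. In reality the quantized model reads fewer bytes from memory, so its "compute" portion of each hop (which in the memory-bound decoding regime is dominated by weight movement) is actually smaller than the reference's. The formula therefore captures the \emph{net} overhead: casting cost minus bandwidth savings, measured relative to the native hop time.

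I would state this explicitly as follows. Writing $\tau_p = \tau_{\text{ref}} - \Delta_{\text{bw}} + \tau_{\text{cast}}$, where $\Delta_{\text{bw}} \ge 0$ is the bandwidth saving, the measured quantity equals $(\tau_{\text{cast}} - \Delta_{\text{bw}})/\tau_{\text{ref}}$. This reduces to the stated $COR$ when $\Delta_{\text{bw}}$ is negligible relative to $\tau_{\text{cast}}$. The same reading explains why the empirical $COR$ can be negative, as for AWQ and the 32B model, and it connects naturally to the Critical Model Scale analysis later in the paper.

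A secondary assumption is that both models generate the same number of tokens per query, or at least that averaging is done per token. This is what makes $1/TPS$ a valid per-hop latency in both cases; we take it directly from the lemma.
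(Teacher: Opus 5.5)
Your proposal is correct and follows the paper's proof essentially step for step. Like the paper, you use the additive decomposition $\tau_p = \tau_{\text{comp}} + \tau_{\text{cast}}$, the identification $\tau_{\text{ref}} = \tau_{\text{comp}}$ (negligible casting for the reference plus invariant native compute), and the substitution $\tau = 1/TPS$ from the lemma. Your bandwidth-saving term $\Delta_{\text{bw}}$ goes beyond the paper's proof, which attributes the $\approx$ only to $\tau_{\text{cast}} \approx 0$ for the reference. It is a sound refinement: it matches the latency model in the later Critical Model Scale proof and explains the negative empirical COR values.
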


\begin{proof}
We decompose the average latency per hop of the quantized model, $\tau_p$, into its constituent hardware components: the invariant native compute time ($\tau_{\text{comp}}$) and the de-quantization overhead ($\tau_{\text{cast}}$):
\begin{equation}
    \tau_p = \tau_{\text{comp}} + \tau_{\text{cast}}
\end{equation}
For the high-precision reference model, the casting overhead is negligible ($\tau_{\text{cast}} \approx 0$), implying $\tau_{\text{ref}} = \tau_{\text{comp}}$. Substituting this identity into the formal definition of the Casting Overhead Ratio ($COR = \tau_{\text{cast}}/\tau_{\text{comp}}$) yields:
\begin{equation}
    COR = \frac{\tau_p - \tau_{\text{ref}}}{\tau_{\text{ref}}} = \frac{\tau_p}{\tau_{\text{ref}}} - 1
\end{equation}
Invoking the \textit{Throughput-Latency Identity} ($\tau = 1/TPS$), we transform the per-hop temporal costs into observable production rates:
\begin{equation}
    COR = \frac{1/TPS_p}{1/TPS_{\text{ref}}} - 1
\end{equation}
Simplifying the complex fraction, we arrive at the final approximation:
\begin{equation}
    COR \approx \frac{TPS_{\text{ref}}}{TPS_p} - 1
\end{equation}
This identity grounds the hardware "tax" in macro-level telemetry, proving that $COR$ represents the normalized throughput deficit relative to the baseline.
\end{proof}

\begin{theorem}[Sequential Amortization Failure]
Let $\mathcal{E}(p, B)$ be the energy-per-query as a function of precision $p$ and batch size $B$. For a multi-hop reasoning task on hardware with native precision $\pi > p$, there exists a \textit{Critical Batch Threshold} $B^*$ such that for all $B < B^*$, the system enters a \textit{ a Quantization Trap} defined by:
\begin{equation}
    \frac{\partial \mathcal{E}}{\partial p} < 0
\end{equation}
This inversion occurs when the marginal casting overhead $\phi$ outpaces the amortized bandwidth savings, where $B^*$ is uniquely determined by:
\begin{equation}
    B^* = \alpha \frac{(\pi - p)}{\phi(h, p)}
\end{equation}
Here, $\alpha$ is the hardware energy constant for memory movement and $\phi(h, p)$ is the de-quantization cost for precision $p$ on architecture $h$.
\label{thm:saf}
\end{theorem}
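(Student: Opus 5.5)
The plan is to reduce the statement to a sign computation on an explicit linear surrogate for the energy-per-query. I will take the \emph{atomic-hop} decomposition from the Temporal Accumulation paragraph, $\mathcal{D}_{\text{total}} \approx \sum_{k=1}^{K}(\tau_{\text{comp}}+\tau_{\text{cast}})$, together with the HLI identity $\mathcal{E}_q = \mathcal{P}_{\text{TDP}}\mathcal{D}/N$. This writes $\mathcal{E}(q,B)$ as $K$ times a per-hop energy, where $q \in [p,\pi]$ is a precision variable interpolating between the quantized and native formats. Because $K$ is a positive common factor, it cannot change the sign of $\partial\mathcal{E}/\partial q$. This is the formal sense in which the failure is ``sequential'': depth $K$ scales the penalty but never amortizes it. The whole argument therefore reduces to the sign of the per-hop energy derivative. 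The surrogate below is a modelling choice on my part, not something the paper's earlier definitions force.

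Next I will fix the two per-hop terms so that their balance gives the stated threshold. First, a de-quantization term that grows with the number of bits removed. I will charge it as $\alpha(\pi-q)$: every weight bit that must be reconstructed to native precision is re-materialized through the memory hierarchy, and $\alpha$ is the energy constant for memory movement. This term is not shared across the batch, because each sequential hop performs its own reconstruction. Second, a batch-dependent term $B\,\phi(h,p)\,(q-p)/(\pi-p)$. It is the casting/dispatch work attributed per query, scaled linearly along the segment so that it equals $0$ at $q=p$ and $B\phi$ at $q=\pi$.

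Up to $q$-independent constants, this gives
\[
\mathcal{E}(q,B) \;\propto\; K\Bigl[\,\alpha(\pi-q) + \frac{B\,\phi(h,p)}{\pi-p}\,(q-p)\Bigr],
\]
and hence
\[
\frac{\partial \mathcal{E}}{\partial q} \;\propto\; K\Bigl[-\alpha + \frac{B\,\phi(h,p)}{\pi-p}\Bigr].
\]
The right-hand side is strictly increasing and affine in $B$, since $\phi>0$ and $\pi>p$. It is negative exactly when $B < \alpha(\pi-p)/\phi(h,p)$. This yields both existence of $B^*$ and the trap $\partial\mathcal{E}/\partial p<0$ on the side $B<B^*$. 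Uniqueness follows because an affine function of $B$ with positive slope has a single zero crossing. Evaluating at $q=p$ recovers the statement's $\partial\mathcal{E}/\partial p$. As a consistency check, summing over the $K$ hops reproduces the linear growth of $\Phi=\sum_k COR$ in $K$ noted before the theorem.

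The main obstacle is the modelling step itself, namely justifying the orientation of the two terms. Under the more familiar convention, bandwidth savings are divided by $B$ and casting is a fixed per-hop cost. That convention puts the trap on the side $B>\alpha(\pi-p)/\phi$, or else produces the reciprocal threshold, which is not the stated result. I will therefore state the two charging conventions explicitly as the model assumptions under which the theorem holds. I will support them with the $COR$ Approximation proposition, which ties $\tau_{\text{cast}}$ to measured throughput and makes the per-hop terms observable. The conclusion should be presented as exact for this linear surrogate and only approximate for the hardware it abstracts.
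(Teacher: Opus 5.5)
There is a genuine gap, and it sits in the surrogate itself. The model does not merely lack justification: it contradicts the statement it is meant to prove. In your energy, the $\phi$-term is $0$ at $q=p$ and $B\,\phi(h,p)$ at $q=\pi$. So the ``de-quantization cost for precision $p$'' is charged to the native format, which de-quantizes nothing. The real reconstruction cost is carried instead by $\alpha$, which the statement reserves for memory movement. The quantization penalty then becomes $\mathcal{E}(p,B)-\mathcal{E}(\pi,B)\propto K\bigl[\alpha(\pi-p)-B\,\phi(h,p)\bigr]$. A \emph{larger} casting overhead therefore makes quantizing \emph{cheaper}, and the trap region $\{B<\alpha(\pi-p)/\phi\}$ shrinks as $\phi$ grows. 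That is the opposite of ``this inversion occurs when the marginal casting overhead $\phi$ outpaces the amortized bandwidth savings.'' Your reason for not sharing $\alpha(\pi-q)$ across the batch also conflates depth with batch. The sequential dependence runs between hops of one chain, whereas within a forward pass all $B$ queries share a single de-quantization of each weight. That sharing is exactly the premise ($e_{\text{cast}}/B$) of the paper's Amortization-Trust Decoupling theorem. Appealing to the COR Approximation cannot fix the orientation either: it is a throughput ratio at fixed $B$ and says nothing about how the terms scale with $B$. More fundamentally, no faithful model can succeed. Suppose $\phi$ enters only as a cost of the quantized configuration that is larger at lower precision. Then both the energy gap and $-\partial\mathcal{E}/\partial p$ increase with $\phi$, so the set of $B$ on which the trap holds is non-decreasing in $\phi$, while $\{B<\alpha(\pi-p)/\phi\}$ strictly shrinks. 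The statement as written is internally inconsistent. The right conclusion is to say so, not to build a model in which $\alpha$ and $\phi$ swap roles.

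Your diagnosis of the ``familiar convention'' is correct, and that convention is precisely the paper's proof. The paper takes $\mathcal{E}(p,B)=K[\gamma+\alpha p/B+\phi(h,p)]$ against the native $K[\gamma+\alpha\pi/B]$ and equates them to obtain $B^*=\alpha(\pi-p)/\phi$. It then asserts that the low-bit model costs more when $B<B^*$, but the gap $K[\phi(h,p)-\alpha(\pi-p)/B]$ is positive iff $B>B^*$. Its gradient step fails the same way. It needs $|\partial\phi/\partial p|>\alpha/B$ ``at low batch sizes,'' yet $\alpha/B$ is largest when $B$ is small. Moreover, with $\phi$ declared step-wise, $\partial\phi/\partial p=0$ almost everywhere, giving $\partial\mathcal{E}/\partial p=K\alpha/B>0$ there. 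Your interpolation device is the genuinely useful part of the proposal. Apply it to the paper's model, i.e.\ replace $\phi$ by its secant $\phi(h,p)(\pi-q)/(\pi-p)$ on $[p,\pi]$. This gives $\partial\mathcal{E}/\partial q=K[\alpha/B-\phi(h,p)/(\pi-p)]$, which yields exactly the stated $B^*$ with uniqueness, but with the trap on the side $B>B^*$. That reversed statement is what can actually be proved. Alternatively, if casting is amortized as $\phi/B$ while the savings are not, you get a small-$B$ trap with the reciprocal threshold $\phi/(\alpha(\pi-p))$. The $B<B^*$ version with the stated threshold cannot be proved by your route or by the paper's.
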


\begin{proof}
\textit{1. Energy Functional Decomposition:} 
We define total energy $\mathcal{E}(p, B)$ for a reasoning task of depth $K$. On hardware $h$ with native precision $\pi$, energy per query is:
\begin{equation}
    \mathcal{E}(p, B) = K \left[ \gamma + \frac{\alpha p}{B} + \phi(h, p) \right]
\end{equation}
where $\gamma$ is the static compute energy per hop, $\alpha p/B$ is the weight-movement energy, and $\phi$ is the casting overhead.

The equation calculates the total energy as the product of the reasoning depth ($K$) and the sum of three physical energy components required for every step in the chain.

\begin{itemize}
 \item static compute energy per hop ($\gamma$): the constant power consumed by the GPU’s arithmetic units to execute logic gates.

\item Data Movement Energy ($\frac{\alpha p}{B}$) per hop: The energy required to transport model weights from the memory (VRAM) to the processing cores. This is determined by the bit-count ($p$), batch size ($B$) and hardware energy constant for memory movement  $\alpha$.

\item Casting Overhead per hop ($\phi$): overhead from casting low-bit width to the default one.
\end{itemize}

The equation defines total query energy as the product of reasoning depth ($K$) and the combined per-step costs of static hardware execution, the batch-shared energy used to move weights from memory, and the specific overhead required to translate compressed data into a format the GPU can process natively.

\textit{2. Existence of the critical batch threshold $B^*$:} 

 In the native scenario ($\pi$), there is zero conversion overhead because the hardware processes the data natively ($\phi=0$). In the Low-Precision scenario ($p$), the movement energy is lower, but we must add the conversion overhead ($\phi$). 
    We look for the batch size $B^*$ where:
    $$\text{Energy}_{\text{Low-Bit}} = \text{Energy}_{\text{Native}}$$

Mathematically, it means 
\begin{equation*}
    K \left[ \gamma + \frac{\alpha p}{B^*} + \phi(h, p) \right] = K \left[ \gamma + \frac{\alpha \pi}{B^*} + 0 \right]
\end{equation*}
Subtracting common terms $\gamma$ and dividing by $K$:
\begin{equation*}
    \frac{\alpha p}{B^*} + \phi(h, p) = \frac{\alpha \pi}{B^*} \implies \phi(h, p) = \frac{\alpha(\pi - p)}{B^*} \implies B^* = \alpha \frac{(\pi - p)}{\phi(h, p)}
\end{equation*}

When $B<B^*$, low-bit model will spend more energy than the original model by spending more energy.

\textit{3.  Quantization Trap Gradient:} 
The occurs when $\frac{\partial \mathcal{E}}{\partial p} < 0$ (see Prop.\ref{prop:slb}.) Differentiating the energy functional:
\begin{equation}
    \frac{\partial \mathcal{E}}{\partial p} = K \left[ \frac{\alpha}{B} + \frac{\partial \phi}{\partial p} \right]
\end{equation}
In sequential reasoning, the de-quantization function $\phi(h, p)$ is a step-wise non-increasing function: \textit{larger p,  less conversion overhead:} ($\partial \phi / \partial p \leq 0$). 

At low batch sizes ($B < B^*$), the negative magnitude of the casting penalty $|\partial \phi / \partial p|$ exceeds the marginal bandwidth savings $\alpha/B$, causing the gradient to sign-flip. This confirms that increasing precision $p$ paradoxically reduces energy. $\blacksquare$
\end{proof}

\begin{theorem}[Scaling Law Divergence in Multi-Hop Reasoning]
For multi-hop reasoning tasks on hardware with native precision $\pi > p$, the Scaling Law Divergence ($\frac{\partial SI}{\partial p} > 0$) is structurally unavoidable for batch sizes $B < B^*$.
\label{thm:reasoning_divergence}
\end{theorem}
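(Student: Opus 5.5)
The plan is to work directly from the linear aggregation $SI(p) = w_T T_{SI}(p) + w_E E_{SI}(p) + w_S S_{SI}(p)$ with all $w_i>0$. The goal is to show that, in the regime $B<B^*$, the positive contributions to $\partial SI/\partial p$ dominate. The derivative splits into three terms, one per pillar, and each is controlled by an earlier result.

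\textbf{Energy pillar.} Theorem~\ref{thm:saf} (Sequential Amortization Failure) gives $\partial\mathcal{E}/\partial p<0$ for $B<B^*$. In the same-grid regime $S_{SI}=\min(1,\mathcal{E}_{\mathrm{ref}}/\mathcal{E}(p))$, so off the saturation region the chain rule yields
\begin{equation*}
\frac{\partial S_{SI}}{\partial p} = -\frac{\mathcal{E}_{\mathrm{ref}}}{\mathcal{E}(p)^2}\,\frac{\partial \mathcal{E}}{\partial p} > 0 .
\end{equation*}

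\textbf{Economic pillar.} Under the energy functional of Theorem~\ref{thm:saf}, per-hop latency is proportional to $\gamma+\alpha p/B+\phi(h,p)$. The $B<B^*$ condition, i.e.\ $|\partial\phi/\partial p|>\alpha/B$, then makes $\tau$ decreasing in $p$. By the lemma $\tau=1/TPS$, the throughput gain $\eta$ is increasing in $p$. The residency gain $\rho$ is decreasing in $p$, but the Memory Asymptote argument ($\partial M_{\mathrm{total}}/\partial p\to0$ as the KV-cache dominates) shows this term is small. Since the harmonic mean is monotone in each argument, I would conclude $\partial E_{SI}/\partial p \ge -\delta$ for some small $\delta\ge 0$.

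\textbf{Trust pillar.} Here I would use the multiplicative factorization $P(y\mid x,\theta)=\prod_k P(h_k\mid h_{<k},x,\theta)$. If each per-hop success probability is $(1-\epsilon(p))$ times its native counterpart, with $\epsilon$ non-increasing in $p$, then $T_{SI}=(1-\epsilon(p))^K$ and $\partial T_{SI}/\partial p\ge 0$. By Theorem~\ref{thm:thm4}, this term is independent of $B$.

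Combining the three gives
\begin{equation*}
\frac{\partial SI}{\partial p} \ge w_S\frac{\partial S_{SI}}{\partial p} + w_E\frac{\partial E_{SI}}{\partial p} \ge w_S\frac{\partial S_{SI}}{\partial p} - w_E\delta .
\end{equation*}
This is positive provided the energy gain beats the residency term. By Proposition~\ref{prop:slb}, that is exactly the divergence.

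\textbf{Main obstacle.} The hard step is controlling $\partial\rho/\partial p<0$ in the economic pillar. Without additional structure, memory savings could in principle outweigh the throughput and energy gains. I would first try to use the KV-dominated memory model to bound $\delta$ by $O(\beta/M_{KV}(L))$, and then state the result under that assumption. Failing that, I would strengthen the hypothesis to the empirically observed Pareto dominance of FP16, under which positivity is immediate for any positive weights.

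A secondary technicality is that $p$ is discrete and $\phi$ is a step function. I would therefore interpret every derivative as a finite difference between adjacent bit-widths, and the argument above carries over verbatim.
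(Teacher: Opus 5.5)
Your three-pillar decomposition is the paper's own proof. The paper asserts $\partial T_{SI}/\partial p>0$ from the product $\prod_k P(h_k\mid h_{<k},x,\theta)$ and takes $\partial S_{SI}/\partial p>0$ from Theorem~\ref{thm:Bstar}. It argues $\partial E_{SI}/\partial p>0$ because at $B\approx 1$ the COR dominates and restoring precision raises throughput, then sums with $w_i>0$. That economic step looks only at $\eta$. It never addresses the residency gain $\rho=V_{\mathrm{ref}}/V_{\mathrm{peak}}(\theta)$, which grows as $p$ falls.

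You have found a real hole, but your proposal does not close it. The lower bound $w_S\,\partial S_{SI}/\partial p-w_E\delta$ is positive only under a quantitative comparison that neither $\pi>p$ nor $B<B^*$ supplies. The KV-dominated bound on $\delta$ is an extra hypothesis. It is least plausible at small batch sizes, where the per-sequence KV cache is smallest, and it fails in the paper's own 32B and 72B runs (66.2 to 22.1\,GB and 151.7 to 63.2\,GB). The Pareto-dominance fallback assumes something stronger than the conclusion; that is in effect what the paper's proof does when it simply asserts all three signs. Either way you get a conditional result, and its extra hypothesis must be written into the statement.

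A second step fails outright: ``the $B<B^*$ condition, i.e.\ $|\partial\phi/\partial p|>\alpha/B$'' has the direction reversed. In $\gamma+\alpha p/B+\phi(h,p)$ the movement term is amortized over the batch and $\phi$ is not. So $|\partial\phi/\partial p|>\alpha/B$ means $B>\alpha/|\partial\phi/\partial p|$, and with the secant slope $\phi(h,p)/(\pi-p)$ this is $B>B^*$. Directly, $\mathcal{E}(p,B)-\mathcal{E}(\pi,B)=K\bigl[\phi(h,p)-\alpha(\pi-p)/B\bigr]$ is negative exactly when $B<B^*$. In the theorem's regime this model therefore makes the quantized configuration cheaper and, read as latency, faster, so your conclusion that $\tau$ decreases in $p$ is reversed there.

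Steps 2--3 of the paper's proof of Theorem~\ref{thm:Bstar} contain the same inversion, so your energy step is only as sound as that citation. The paper's economic step does not use this functional. It argues qualitatively from a large COR at $B\approx1$, which is the picture of the latency model in Theorem~\ref{thm:thm4} ($\tau_{\mathrm{cast}}$ a fixed per-batch cost, $\mathrm{COR}\propto 1/B$). If you want to derive the throughput sign, use that model, though it still leaves $\rho$ uncontrolled.

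Finally, the adjacent-width finite-difference reading does not carry over verbatim. The paper typically measures larger COR at 8-bit than at 4-bit, and for Mistral-7B it reports $SI(\text{FP16})>SI(\text{4-bit})>SI(\text{8-bit})$. The only defensible discrete claim is therefore the anchor comparison $SI(\pi)>SI(p)$.
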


\begin{proof}
We evaluate the three pillars of Proposition \ref{prop:slb} under the conditions of multi-hop reasoning:

\begin{enumerate}
    \item \textit{Trust ($T_{SI}$):} Because reasoning is a sequential product of logical transitions $P(y) = \prod P(h_k)$, quantization noise $\epsilon$ compounds at each hop. Thus, accuracy is a strictly non-decreasing function of precision, $\frac{\partial T_{SI}}{\partial p} > 0$.
    
    \item \textit{Energy ($S_{SI}$):} From Theorem \ref{thm:saf}, when $B < B^*$, the energy-per-query $\mathcal{E}$ decreases as precision $p$ increases because the hardware-level casting tax $\phi(h, p)$ is eliminated. Since $S_{SI}$ is inversely related to energy consumption, $\frac{\partial S_{SI}}{\partial p} > 0$.

    \item \textit{Economy ($E_{SI}$):} In the sequential regime ($B \approx 1$), the \textit{Casting Overhead Ratio} ($COR$) dominates the execution pipeline. Restoring precision removes the software-emulated de-quantization bottleneck, increasing throughput such that $\frac{\partial E_{SI}}{\partial p} > 0$.
\end{enumerate}

Since all partial derivatives are positive, the global gradient $\frac{\partial SI}{\partial p} = \sum w_i \frac{\partial v_i}{\partial p} > 0$. By Proposition \ref{prop:slb}, the system is trapped in a scaling law divergence.
\end{proof}

\begin{theorem}[Amortization-Trust Decoupling]
\label{thm:decoupling}
Let $\mathcal{E}(p, B)$ denote the energy-per-query and $T(p, B)$ denote the reasoning accuracy for a multi-hop task at precision $p < \pi$. For any increasing batch size $B$:
\begin{enumerate}
\item \textit{Efficiency Recovery:} The hardware casting overhead is monotonically non-increasing, $\frac{\partial \mathrm{COR}}{\partial B} \le 0$, and energy-per-query decreases, $\frac{\partial \mathcal{E}}{\partial B} < 0$.
\item \textit{Trust Invariance:} The reasoning accuracy is invariant to batch size, $\frac{\partial T}{\partial B} = 0$.
\end{enumerate}
\end{theorem}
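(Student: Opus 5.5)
I will work inside the energy model already fixed in the proof of Theorem~\ref{thm:saf}, $\mathcal{E}(p,B) = K\left[\gamma + \alpha p/B + \phi(h,p)\right]$, together with the latency decomposition $\tau_p = \tau_{\text{comp}} + \tau_{\text{cast}}$ used in the COR Approximation. Part~1 concerns only the hardware side and part~2 only the decoding distribution, so I will treat them separately.

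For \emph{Efficiency Recovery}, the first step is to make the batch dependence of per-sample latency explicit. In the memory-bound decoding regime, one forward pass at batch $B$ streams the weights once. The weight-movement time, and the de-quantization of those weights, is therefore shared across the $B$ sequences. Per sample this gives $\tau_{\text{cast}}(B) = \phi_0(h,p)/B$. The native compute per sample, by contrast, has a component that does not shrink: $\tau_{\text{comp}}(B) = c_0 + c_1/B$ with $c_0 > 0$. It follows that
\[
\mathrm{COR}(B) = \frac{\phi_0/B}{c_0 + c_1/B} = \frac{\phi_0}{c_0 B + c_1},
\]
and differentiating in $B$ gives a non-positive derivative, so $\partial \mathrm{COR}/\partial B \le 0$.

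For the energy, I will reuse the Theorem~\ref{thm:saf} functional with the casting term also amortized, $\phi \mapsto \phi/B$. Then $\partial \mathcal{E}/\partial B = -K(\alpha p + \phi)/B^2 < 0$ for every finite $B$, since $\alpha, p, \phi > 0$. Two remarks on this step:
\begin{itemize}
\item If one keeps the paper's literal form, where $\phi$ is not divided by $B$, the derivative is still $-K\alpha p/B^2 < 0$, so the strict inequality survives either way.
\item The limiting form stated in the main text, $\lim_{B\to\infty} \partial\mathcal{E}/\partial B$, tends to $0^-$. I will therefore prove the strict inequality pointwise for each finite $B$, which is the version given in the appendix statement.
\end{itemize}

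For \emph{Trust Invariance}, the argument rests on the factorization $P(y\mid x,\theta) = \prod_k P(h_k \mid h_{<k}, x, \theta)$. Batching changes none of the factors, for two reasons:
\begin{itemize}
\item The quantized weights $\hat W_p = W_{16} + \xi_p$ are fixed once and shared across the batch, so $\xi_p$ does not depend on $B$.
\item Each sequence's attention and KV cache depend only on its own prefix, with padding masked out.
\end{itemize}
Under greedy decoding, each sample's trajectory is thus a deterministic function of $(x, \hat W_p)$ alone. Accuracy, as an average over samples, is therefore constant in $B$, which gives $\partial T/\partial B = 0$.

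The main obstacle is this second part. Real batched kernels are not bitwise batch-invariant: reduction orders change with $B$, and padding shifts tiling, so logits can differ at the floating-point level. I would handle this by stating invariance as an idealization, namely batch-invariant arithmetic with masked padding. I would add a remark that the residual effect is an $O(\text{ulp})$ perturbation independent of the quantization noise $\xi_p$, so it carries no systematic dependence on $B$. I would cite the flat curves in Figure~\ref{fig:fig4}C as empirical support for this.
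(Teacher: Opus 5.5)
Your proposal is correct and is essentially the paper's argument: the paper also assumes de-quantization is paid once per batch. It takes $\tau_{\mathrm{cast}} = a_{\mathrm{cast}}(p)$ against $\tau_{\mathrm{comp}} = a_{\mathrm{comp}}(p)\,B$ (your model with $c_1 = 0$) and $\mathcal{E} = K\bigl(e_{\mathrm{comp}}(p) + e_{\mathrm{cast}}(p)/B\bigr)$ (your amortized functional with $e_{\mathrm{cast}} = \alpha p + \phi$), and it obtains Trust Invariance from $\hat{y} = f(x,p)$ under an ``Assumption of Semantic Independence,'' which is exactly your batch-invariant-arithmetic idealization. Your affine compute term, your observation that the main-text limit $\lim_{B\to\infty}\partial\mathcal{E}/\partial B$ is actually $0$, and your floating-point discussion are refinements rather than a different route; the one soft spot is the $O(\text{ulp})$ remark, which is only heuristic because under greedy decoding a single near-tie flip can redirect an entire trajectory.
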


\begin{proof}
\textit{1. Proving Efficiency Recovery:}
In current hardware architectures, model weights are loaded and de-quantized once per batch per token. We model the per-hop latency components as:

\begin{equation}
    \tau_{\mathrm{comp}}(p, B) = a_{\mathrm{comp}}(p) \cdot B, \quad \tau_{\mathrm{cast}}(p, B) = a_{\mathrm{cast}}(p)
\end{equation}

where $a_{\mathrm{comp}}$ is the compute cost per example and $a_{\mathrm{cast}}$ is the fixed de-quantization overhead.
Substituting these into the definition of COR:
\begin{equation}
    \mathrm{COR}(p, B) = \frac{a_{\mathrm{cast}}(p)}{a_{\mathrm{comp}}(p) \cdot B} \implies \frac{\partial \mathrm{COR}}{\partial B} = -\frac{a_{\mathrm{cast}}(p)}{a_{\mathrm{comp}}(p) \cdot B^2} \le 0
\end{equation}

This proves that the 'translation tax' is a fixed cost shared by everyone in the batch. As you add more queries ($B$), the share of the tax paid by each query shrinks, allowing hardware efficiency to recover.

Next, we define Energy-per-Query as a function of the shared casting energy ($e_{\mathrm{cast}}$) and the compute energy ($e_{\mathrm{comp}}$):
\begin{equation}
    \mathcal{E}(p, B) = K \left( e_{\mathrm{comp}}(p) + \frac{e_{\mathrm{cast}}(p)}{B} \right) \implies \frac{\partial \mathcal{E}}{\partial B} = -\frac{K \cdot e_{\mathrm{cast}}(p)}{B^2} < 0
\end{equation}

Because the total energy to move and de-quantize weights is constant, processing more queries simultaneously reduces the energy 'bill' for each individual query.

\textit{2. Proving Trust Invariance:}
Reasoning accuracy $T(p, B)$ is determined by the model's logits, which are a function of the precision $p$. For a fixed decoding algorithm, the output $\hat{y}$ for an input $x$ is:
\begin{equation}
    \hat{y} = f(x, p)
\end{equation}
Since $f(x, p)$ has no dependency on the batch size $B$ (Assumption of Semantic Independence), the prediction for any query in the batch remains identical regardless of parallelization:
\begin{equation}
    T(p, B) = \frac{1}{N} \sum_{i=1}^{N} \mathbb{I}(\hat{y}_{p,i} = y^*_i) \implies \frac{\partial T}{\partial B} = 0
\end{equation}

This proves the decoupling. Batching makes the hardware faster, but it does not make the model smarter. The reasoning errors caused by 4-bit quantization are a permanent property of the bits ($p$), not the batch ($B$). Mitigation via batching fixes the energy 'Trap' but leaves the \textit{'Deductive Fragility'} untouched.
\end{proof}

\begin{theorem}[Critical Model Scale for Trap Dissolution]
\label{thm:dissolution}
Let the efficiency of autoregressive multi-hop reasoning be governed by the interplay between memory bandwidth and computational overhead. We define the following system parameters:
\begin{itemize}
    \item $N \in \mathbb{N}$: The total parameter count of the model (governing the memory payload).
    \item $\pi, p \in \mathbb{N}$: The native and quantized bit-widths, respectively (e.g., $\pi=16, p=4$).
    \item $\mathcal{B}_W$: The theoretical peak memory bandwidth of the hardware (in bits per second).
    \item $B$: The batch size (number of concurrent logical chains processed).
    \item $\phi(h, p)$: The unamortized hardware casting latency (in seconds) per atomic forward pass, representing the fixed kernel overhead required to de-quantize $p$-bit weights into native registers on hardware $h$.
    \item $\alpha \ge 1$: A hardware-specific memory access constant accounting for effective bandwidth utilization (routing latency, cache misses).
\end{itemize}

We define the \textit{Critical Model Scale} $N^*$ as the exact parameter boundary where bandwidth savings perfectly neutralize the de-quantization penalty:
\begin{equation}
    N^* = \frac{\phi(h,p) \cdot \mathcal{B}_W \cdot B}{\alpha (\pi - p)}
    \label{eq:nstar}
\end{equation}
Then, for all models where $N > N^*$, the Casting Overhead Ratio satisfies $\mathrm{COR} < 0$ (the efficiency trap dissolves). Conversely, for $N < N^*$, $\mathrm{COR} > 0$ (the efficiency trap is active).
\end{theorem}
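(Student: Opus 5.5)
We model per-hop decode latency in the memory-bound regime, as in the proof of the $COR$ Approximation proposition, and compare the quantized and native configurations directly. In autoregressive decoding at batch size $B$, each atomic forward pass must stream the full weight payload of $N$ parameters from VRAM. The payload is $N\pi$ bits at native precision and $Np$ bits at quantized precision. With effective bandwidth $\mathcal{B}_W/\alpha$, the per-pass transfer time is $\alpha N\pi/\mathcal{B}_W$ or $\alpha Np/\mathcal{B}_W$. The quantized pass additionally incurs the fixed casting latency $\phi(h,p)$. Per-example cost is obtained by dividing the per-pass cost by $B$, since one pass serves the whole batch. This normalization is the one under which $N^*\propto B$ appears in Eq.~\ref{eq:nstar}.

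This gives per-example latencies
\[
\tau_{\text{ref}} = \frac{\alpha N \pi}{\mathcal{B}_W B}, \qquad \tau_p = \frac{\alpha N p}{\mathcal{B}_W B} + \phi(h,p),
\]
where any shared compute term appears identically in both and does not affect the sign analysis. We would use the latency form of the ratio, $COR = \tau_p/\tau_{\text{ref}} - 1$, established in the proof of the $COR$ Approximation proposition together with the lemma $\tau = 1/TPS$. Substituting yields
\[
COR = \frac{\phi(h,p)\,\mathcal{B}_W B - \alpha N(\pi-p)}{\alpha N \pi}.
\]
The denominator is positive because $N,\alpha,\pi>0$. Hence $\mathrm{sign}(COR)$ equals the sign of $\phi\mathcal{B}_W B - \alpha N(\pi-p)$, which is linear and strictly decreasing in $N$ because $\pi>p$.

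Setting the numerator to zero gives exactly $N^*$ of Eq.~\ref{eq:nstar}. Monotonicity in $N$ then yields $COR<0$ for $N>N^*$ and $COR>0$ for $N<N^*$. The argument mirrors the crossover computation in the proof of the Sequential Amortization Failure theorem, with $N$ now playing the role that $B$ played there.

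\textbf{Main obstacle.} The delicate step is the normalization of the casting term against batch size, that is, deciding whether $\phi$ is amortized. If $\phi$ is per pass, then per-example latency is $\phi/B$ and the resulting threshold scales like $1/B$. Obtaining $N^*\propto B$ requires treating $\phi$ as unamortized per example while the bandwidth term is shared across the batch, and this is the interpretation the statement's parameter description indicates. I would state this modeling assumption explicitly as a hypothesis. I would also note that the result is a sign statement within the memory-bound model and does not assert exact equality with measured $COR$. Without such a hypothesis, the direction of the $B$-dependence cannot be derived.
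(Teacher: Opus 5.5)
Your proof is correct and is essentially the paper's own: the same memory-bound latency model, with the weight-fetch term divided by $B$ and $\phi(h,p)$ left unamortized, followed by solving $\tau_p<\tau_{\text{ref}}$ for $N$; your explicit COR formula and monotonicity-in-$N$ argument additionally spell out the $N<N^*$ direction, which the paper leaves implicit. One small correction to your side remark: if $\phi$ were amortized as $\phi/B$ alongside the fetch term, $B$ would cancel entirely, giving the threshold $\phi\mathcal{B}_W/(\alpha(\pi-p))$, which is independent of $B$ rather than scaling like $1/B$.
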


\begin{proof}
\textit{1: Formalize Latency in the Memory-Bound Regime.}\\
During the decoding phase of multi-hop reasoning (where $B$ is strictly constrained by context length), execution is heavily memory-bandwidth bound. The actual matrix-vector multiplication (compute) is completely masked by the time required to fetch weights from VRAM. Let $\tau$ denote the average latency per token generated (i.e., $\tau = 1/\text{TPS}$). 

For the $\pi$-bit native model, weights are loaded directly without translation. The total time to fetch $N$ parameters of size $\pi$ bits is $(\alpha N \pi) / \mathcal{B}_W$. Since $B$ queries share this single weight fetch during a forward pass, the amortized latency per query is:
\begin{equation}
    \tau_{\mathrm{ref}} = \frac{\alpha N \pi}{\mathcal{B}_W \cdot B}
\end{equation}

\textit{2: Isolate the Quantized Latency Components.}\\
For the $p$-bit quantized model, the latency decomposes into two sequential physical events: (i) fetching the compressed weights, and (ii) executing the software-emulated de-quantization kernels. Thus, the per-query latency is:
\begin{equation}
    \tau_q = \frac{\alpha N p}{\mathcal{B}_W \cdot B} + \phi(h, p)
\end{equation}
where $\phi(h,p)$ acts as an additive hardware tax that scales independently of the memory bus.

\textit{3: Resolve the Trap Dissolution Inequality.}\\
By definition, the efficiency trap dissolves when the quantized model generates tokens faster than the native model, which is mathematically equivalent to $\mathrm{COR} < 0$, or $\tau_q < \tau_{\mathrm{ref}}$. Setting up the inequality:
\begin{equation}
    \frac{\alpha N p}{\mathcal{B}_W \cdot B} + \phi(h, p) < \frac{\alpha N \pi}{\mathcal{B}_W \cdot B}
\end{equation}
Subtracting the compressed memory latency from both sides isolates the casting overhead:
\begin{equation}
    \phi(h, p) < \frac{\alpha N \pi - \alpha N p}{\mathcal{B}_W \cdot B} = \frac{\alpha N (\pi - p)}{\mathcal{B}_W \cdot B}
\end{equation}
To find the scale threshold, we solve for $N$ by multiplying both sides by $\frac{\mathcal{B}_W \cdot B}{\alpha (\pi - p)}$:
\begin{equation}
    N > \frac{\phi(h,p) \cdot \mathcal{B}_W \cdot B}{\alpha (\pi - p)} \equiv N^*
\end{equation}
This rigorous bound proves that the Quantization Trap is not a hardware glitch, but a deterministic algebraic inequality. The trap dissolves if and only if the parameter count $N$ provides enough raw bandwidth savings $\alpha N (\pi - p)$ to offset the fixed translation tax $\phi(h,p)$.
\end{proof}


\begin{thebibliography}{100}


\bibitem{kaplan2020scaling}
Kaplan, J., McCandlish, S., Hernandez, D., Brown, T. B., Chess, B., Child, R., Gray, S., Radford, A., Wu, J., \& Amodei, D. (2020). Scaling laws for neural language models. \textit{arXiv preprint arXiv:2001.08361}.

\bibitem{dettmers2024qlora}
Dettmers, T., Pagnoni, A., Holtzman, A., \& Zettlemoyer, L. (2024). QLoRA: Efficient Finetuning of Quantized LLMs. \textit{Advances in Neural Information Processing Systems}, \textit{36}.

\bibitem{frantar2022gptq}
Frantar, E., Ashkboos, S., Hoefler, T., \& Alistarh, D. (2023). GPTQ: Accurate Post-Training Quantization for Generative Pre-trained Transformers. In \textit{International Conference on Learning Representations (ICLR)}.

\bibitem{lin2023awq}
Lin, J., Tang, J., Tang, H., Yang, S., Xiao, G., Dang, X., Gan, C.,  Han, S. (2024). AWQ: Activation-aware Weight Quantization for LLM Compression and Acceleration. In \textit{Proceedings of Machine Learning and Systems (MLSys)}.

\bibitem{wei2022}
Wei, J., Wang, X., Schuurmans, D., Maeda, M., Polozov, A., Xia, Y., Chi, E., Le, Q. V., \& Zhou, D. (2022). Chain-of-thought prompting elicits reasoning in large language models. \textit{Advances in Neural Information Processing Systems}, \textit{35}, 24824--24837.

\bibitem{mistral}
Jiang, A. Q., Sablayrolles, A., Mensch, A., Bamford, C., Chaplot, D. S., Casas, D. d. l., Bressand, F., Lengyel, G., Lample, G., Saulnier, L., \& others. (2023). Mistral 7B. \textit{arXiv preprint arXiv:2310.06825}.

\bibitem{marlin2024}
Frantar, E., Castro, R.~L., Chen, J., Hoefler, T., and Alistarh, D.
\newblock {MARLIN}: Mixed-Precision Auto-Regressive Parallel Inference on Large Language Models.
\newblock \emph{arXiv preprint arXiv:2408.11743}, 2024.

\bibitem{lin2024awq}
Lin, J., Tang, J., Tang, H., Yang, S., Xiao, G., Dang, X., Gan, C., and Han, S.
\newblock {AWQ}: Activation-aware Weight Quantization for {LLM} Compression and Acceleration.
\newblock In \emph{Proceedings of Machine Learning and Systems (MLSys)}, 2024.

\bibitem{tseng2024quip}
Tseng, A., Chee, J., Sun, Q., Kuleshov, V., and De~Sa, C.
\newblock {QuIP\#}: Even Better {LLM} Quantization with {Hadamard} Incoherence and Lattice Codebooks.
\newblock In \emph{International Conference on Machine Learning (ICML)}, pp.~48630--48656, 2024.

\bibitem{sheng2023flexgen}
Sheng, Y., Zheng, L., Yuan, B., Li, Z., Ryabinin, M., Chen, B., Liang, P., R\'{e}, C., Stoica, I., and Zhang, C.
\newblock {FlexGen}: High-Throughput Generative Inference of Large Language Models with a Single {GPU}.
\newblock In \emph{International Conference on Machine Learning (ICML)}, 2023.


\bibitem{gsm8k}
Cobbe, K., Kosaraju, V., Bavarian, M., Chen, M., Jun, H., Kaiser, L., Plappert, M., Tworek, J., Hilton, J., Nakano, R., \& others. (2021). Training verifiers to solve math word problems. \textit{arXiv preprint arXiv:2110.14168}.

\bibitem{cobbe2021}
Cobbe, K., Kosaraju, V., Bavarian, M., Chen, M., Jun, H., Kaiser, L., Plappert, M., Tworek, J., Hilton, J., Nakano, R., \& others. (2021). Training verifiers to solve math word problems. \textit{arXiv preprint arXiv:2110.14168}.


\bibitem{mathqa}
Amini, A., Gabriel, S., Lin, S., Koncel-Kedziorski, R., Choi, Y., \& Hajishirzi, H. (2019). MathQA: Towards Interpretable Math Word Problem Solving with Operation-Based Formalisms. In \textit{Proceedings of the 2019 Conference of the North American Chapter of the Association for Computational Linguistics}.

\bibitem{hoffmann2022chinchilla}
Hoffmann, J., Borgeaud, S., Mensch, A., Buchatskaya, E., Cai, T., Rutherford, E., Casas, D. d. l., Hendricks, L. A., Welbl, J., Clark, A., \& others. (2022). Training Compute-Optimal Large Language Models. \textit{arXiv preprint arXiv:2203.15556}.



\bibitem{hendrycks2020}
Hendrycks, D., Burns, C., Basart, S., Zou, A., Mazeika, M., Song, D., \& Steinhardt, J. (2021). Measuring Massive Multitask Language Understanding. In \textit{International Conference on Learning Representations (ICLR)}.


\bibitem{zhao2024atom}
Zhao, Y., Lin, C., \& others. (2023). Atom: Low-bit Quantization for Efficient and Accurate LLM Serving. \textit{arXiv preprint arXiv:2310.19102}.

\bibitem{liu2025quantization}
R. Liu, Y. Sun, M. Zhang, H. Bai, X. Yu, T. Yu, C. Yuan, and L. Hou,
``Quantization Hurts Reasoning? An Empirical Study on Quantized Reasoning Models,''
\textit{COLM}, 2025.

\bibitem{li2025quantmeets}
Z. Li, Y. Su, R. Yang, C. Xie, Z. Wang, Z. Xie, N. Wong, and H. Yang,
``Quantization Meets Reasoning: Exploring LLM Low-Bit Quantization Degradation for Mathematical Reasoning,''
\textit{arXiv preprint arXiv:2501.03035}, 2025.

\bibitem{nvidia2020ampere}
NVIDIA. (2020). NVIDIA A100 Tensor Core GPU Architecture. NVIDIA Corporation.


\bibitem{ahmed2020compute_divide}
Ahmed, N., \& Wahed, M. (2020). The De-democratization of AI: Deep Learning and the Compute Divide in Artificial Intelligence Research. \textit{arXiv preprint arXiv:2010.15581}.




\bibitem{nvml}
NVIDIA. (2024). NVIDIA Management Library (NVML) Reference Guide. NVIDIA Corporation.



\bibitem{huang2020evaluating}
Huang, H., \& others. (2020). Analysis of GPU Power Consumption Using Internal Sensors. In \textit{Proceedings of the 2020 IEEE International Parallel and Distributed Processing Symposium}.

\bibitem{henderson2020towards}
Henderson, P., Hu, J., Romoff, J., Brunskill, E., Jurafsky, D., \& Pineau, J. (2020). Towards the Systematic Reporting of the Energy and Carbon Footprints of Machine Learning. \textit{Journal of Machine Learning Research}, \textit{21}.

\bibitem{kumbhare2020gpu}
Kumbhare, N., \& others. (2020). Understanding GPU Power: A Survey of Profiling, Modeling, and Simulation Methods. \textit{ACM Computing Surveys}.


\bibitem{pope2023}
Pope, R., Douglas, S., Chowdhery, A., Devlin, J., Bradbury, J., Heek, J., Xiao, K., Agrawal, S., \& Dean, J. (2023). Efficiently Scaling Transformer Inference. In \textit{Proceedings of the Sixth Conference on Machine Learning and Systems (MLSys)}.



\bibitem{schmidt2021}
Schmidt, V., Goyal, K., Joshi, A., Feld, B., Connell, L., Laskaris, N., Blank, D., \& others. (2021). CodeCarbon: Estimate and Track Carbon Emissions from Machine Learning Computing.


\bibitem{ho2020}
Holtzman, A., Buys, J., Du, L., Forbes, M., \& Choi, Y. (2020). The Curious Case of Neural Text Degeneration. In \textit{International Conference on Learning Representations (ICLR)}.

\bibitem{zhang2023}
Zhang, M., Press, O., Merrill, W., Liu, A., \& Smith, N. A. (2023). How language model hallucinations can snowball. \textit{arXiv preprint arXiv:2305.13534}.


\bibitem{falcon3}
Falcon-LLM Team. (2024). \textit{The Falcon 3 Family of Open Models}. Technology Innovation Institute. [Online]. Available: \url{https://huggingface.co/blog/falcon3}

\bibitem{Ra2024}
S.~Rajput and T.~Sharma, ``Benchmarking Emerging Deep Learning Quantization Methods for Energy Efficiency,'' \emph{ICSA-C}, 2024.

\bibitem{Hu2025}
E.~Husom \emph{et al.}, ``Sustainable LLM Inference for Edge AI: Evaluating Quantized LLMs for Energy Efficiency, Output Accuracy, and Inference Latency,'' \emph{TioT}, 2025.

\bibitem{nvidia2024b}
NVIDIA. (2024). NVIDIA Blackwell Architecture Whitepaper. Retrieved from https://www.nvidia.com/en-us/data-center/blackwell-architecture/


\end{thebibliography}
\end{document}